\newcommand{\la}{\langle}
\newcommand{\ra}{\rangle}
\newcommand{\up}{\textnormal{up}}
\newcommand{\loc}{\textnormal{loc}}
\newcommand{\ser}{\textnormal{ser}}
\newcommand{\all}{\textnormal{all}}
\newcommand{\regret}{\mathrm{Regret}}
\newcommand*{\rom}[1]{\expandafter\@slowromancap\romannumeral #1@}
\newcommand{\alglinelabel}{%
  \addtocounter{ALC@line}{-1}
  \refstepcounter{ALC@line}
  \label
}
\title{\huge A Simple and Provably Efficient Algorithm for Asynchronous Federated Contextual Linear Bandits}
\author{Jiafan He\thanks{Equal Contribution} \thanks{Department of Computer Science, University of California, Los Angeles, CA 90095, USA; e-mail: {\tt jiafanhe19@ucla.edu}}
\and Tianhao Wang$^*$\thanks{Department of Statistics and Data Science, Yale University, New Haven, CT 06520, USA; e-mail: {\tt tianhao.wang@yale.edu}}
\and Yifei Min$^*$\thanks{Department of Statistics and Data Science, Yale University, New Haven, CT 06520, USA; e-mail: {\tt yifei.min@yale.edu}}
\and Quanquan Gu\thanks{Department of Computer Science, University of California, Los Angeles, CA 90095, USA; e-mail: {\tt qgu@cs.ucla.edu}}
}
\begin{document}
\date{}
\maketitle

\begin{abstract}
We study federated contextual linear bandits, where $M$ agents cooperate with 
each other to solve a global contextual linear bandit problem with the help of a central server. 
We consider the asynchronous setting, where all agents work independently and 
the communication between one agent and the server will not trigger other agents' communication. 
We propose a simple algorithm named \texttt{FedLinUCB} based on the principle of optimism. 
We prove that the regret of \texttt{FedLinUCB} is bounded by $\tilde O(d\sqrt{\sum_{m=1}^M T_m})$ 
and the communication complexity is $\tilde{O}(dM^2)$, where $d$ is the 
dimension of the contextual vector and $T_m$ is the total number of interactions 
with the environment by $m$-th agent. 
To the best of our knowledge, this is the first provably efficient algorithm 
that allows fully asynchronous communication for federated contextual linear 
bandits, while achieving the same regret guarantee as in the single-agent setting. 
\end{abstract}

\section{Introduction}
Contextual linear bandit is a canonical model in sequential decision making with 
partial information feedback that has found vast applications in real-world 
domains such as recommendation systems~\citep{li2010contextual,li2010exploitation,
gentile2014online,li2020federated}, clinical trials~\citep{wang1991sequential,
durand2018contextual} and economics \citep{jagadeesan2021learning,li2022rate}. 
Most existing works on contextual linear bandits focus on either the single-agent 
setting~\citep{auer2002using,abe2003reinforcement,dani2008stochastic,li2010contextual,
rusmevichientong2010linearly,chu2011contextual,abbasi2011improved,agrawal2013thompson} 
or multi-agent settings where communications between agents are instant and 
unrestricted~\citep{cesa2013gang,li2016collaborative,wu2016contextual,li2021unifying}. 
Due to the increasing amount of data being distributed across a large number of 
local agents (e.g., clients, users, edge devices), federated 
learning~\citep{mcmahan2017communication,karimireddy2020mime} has become an 
emerging paradigm for distributed machine learning, where agents can jointly
learn a global model without sharing their own localized data.
This motivates the development of distributed/federated linear 
bandits~\citep{wang2019distributed,huang2021federated, li2021asynchronous}, 
which enables a collection of agents to cooperate with each other to solve a 
global linear bandit problem while enjoying performance guarantees comparable to 
those in the classical single-agent centralized setting.

However, most existing federated linear bandits algorithms are limited to the 
synchronous setting~\citep{wang2019distributed, dubey2020differentially,huang2021federated}, 
where all the agents have to first upload their local data to the server upon 
the request of the server, and the agents will download the latest data from the 
server after all uploads are complete. 
This requires full participation of the agents and global synchronization 
mandated by the server, which is impractical in many real-world application scenarios. 
The only notable exception is~\citet{li2021asynchronous}, where an asynchronous 
federated linear bandit algorithm is proposed. Nevertheless, in their algorithm, 
the upload by one agent may trigger the download from the server to all other agents. 
Therefore, the communications between different agents and the server are not 
totally independent. 
Moreover, their regret guarantee relies on a stringent regularity assumption on 
the contexts, which basically requires the contexts to be stochastic rather than 
adversarial as in standard contextual linear bandits. 
Therefore, how to design a truly asynchronous contextual linear bandit algorithm 
remains an open problem.

In this work, we resolve the above open problem by proposing a simple algorithm 
for asynchronous federated contextual linear bandits over a star-shaped 
communication network. 
Our algorithm is based on the principle of optimism~\citep{abbasi2011improved}. 
The communication protocol of our algorithm enjoys the following features:
(i) Each agent can decide whether or not to participate in each round. 
Full participation is not required, thus it allows temporarily offline agents;
and (ii) the communication between each agent and the server is asynchronous and 
totally independent of other agents. 
There is no need of global synchronization or mandatory coordination by the server.
In particular, the communication between the agent and the server is triggered 
by a matrix determinant-based criterion that can be computed independently by each agent. 
Our algorithm design not only allows the agents to independently operate and 
synchronize with the server, but also ensures low communication complexity 
(i.e., total number of rounds of communication between agents and the server) 
and low switching cost (i.e., total number of local model updates for all 
agents)~\citep{abbasi2011improved}.  

While being simple, our algorithm design introduces a challenge in the regret analysis. 
Since the order of the interaction between the agent and the environment is not 
fixed, standard martingale-based concentration inequality cannot be directly applied. 
Specifically, this challenge arises due to the mismatch between the partial data 
information collected by the central server and the true order of the data 
generated from the interaction with the environment, as is explained in detail 
in Section~\ref{sec: theory} and illustrated by Figure~\ref{fig:filtration}.
We address this challenge by a novel proof technique, which first establishes 
the local concentration of each agent's data and then relates it to the 
``virtual'' global concentration of all data via the determinant-based criterion.
Based on this proof technique, we are able to obtain tight enough confidence 
bounds that leads to a nearly optimal regret.

\paragraph{Main contributions.} 
Our contributions are highlighted as follows:
\begin{itemize}[leftmargin=*]
\item We devise a simple algorithm named \texttt{FedLinUCB} that achieves 
near-optimal regret, low communication complexity and low switching cost 
simultaneously for asynchronous federated contextual linear bandits.
In detail, we prove that our algorithms achieves a near-optimal $\tilde O(d\sqrt{T})$ 
regret with merely $\tilde O(dM^2)$ total communication complexity and 
$\tilde O(dM^2)$ total switching cost. Here $M$ is the number of agents, $d$ is
the dimension of the context and $T = \sum_{m=1}^M T_m$ is the total number of 
rounds with $T_m$ being the number of rounds that agent $m$ participates in. 
When degenerated to single-agent bandits, the regret of our algorithm matches 
the optimal regret $\tilde O(d\sqrt{T})$~\citep{abbasi2011improved}. 
    
\item We also prove an $\Omega(M/\log (T/M))$ lower bound for the communication 
complexity. 
Together with the $ O(dM^2)$ upper bound of our algorithm, it suggests that 
there is only an $\tilde O(dM)$ gap between the upper and lower bounds of the 
communication complexity. 
    
\item We identify the issue of ill-defined filtration caused by the unfixed 
order of interactions between agents and the environment, which is absent in 
previous synchronous or single-agent settings.
We tackle this unique challenge by connecting the local concentration of each 
local agent and the global concentration of the aggregated data from all agents. 
We believe this proof technique is of independent interest for the analysis of 
other asynchronous bandit problems. 
\end{itemize}

\paragraph{Notation.}  
We use lower case letters to denote scalars, and lower and upper case bold face
letters to denote vectors and matrices respectively. For any positive integer $n$, we denote the set $\{1, 2, \ldots, n\}$ by $[n]$.
We use $\Ib$ to denote the $d\times d$ identity matrix. For two sequences $\{a_n\}$ and $\{b_n\}$, we write $a_n=O(b_n)$ if there exists an absolute constant $C$ such that $a_n\leq Cb_n$. We use $\tilde O$ to hide poly-logarithmic terms.
For any vector $\xb \in \RR^d$ and positive semi-definite 
$\bSigma \in \RR^{d \times d}$, we denote $\|\xb\|_{\bSigma} = \sqrt{\xb^\top \bSigma \xb}$, and by $\det(\bSigma)$ the determinant of $\bSigma$.

\section{Related Work}

We review related work on distributed/federated bandit algorithms based on the 
type of bandits: (1) multi-armed bandits, (2) stochastic linear bandits 
and (3) contextual linear bandits.

\paragraph{Distributed/federated multi-armed bandits.}
There is a vast literature on distributed/federated multi-armed bandits (MABs)
\citep{liu2010distributed,szorenyi2013gossip,landgren2016distributed,
chakraborty2017coordinated,landgren2018social,martinez2019decentralized,
sankararaman2019social,wang2019distributed,wang2020optimal,zhu2021federated}, 
to mention a few. 
However, none of these algorithms can be directly applied to linear bandits, 
needless to say contextual linear bandits with infinite decision sets.

\paragraph{Distributed/federated stochastic linear bandits.}
In distributed/federated stochastic linear bandits, the decision set is fixed 
across all the rounds $t\in[T]$ and all the agents $m\in[M]$. 
\citet{wang2019distributed} proposed the \texttt{DELD} algorithm for distributed 
stochastic linear bandits on both star-shaped network and P2P network. 
\citet{huang2021federated} proposed an arm elimination-based algorithm called 
\texttt{Fed-PE} for federated stochastic linear bandits on the star-shaped network. 
Both algorithms are in the synchronous setting and require full participation of 
the agents upon the server's request.

\paragraph{Distributed/federated contextual linear bandits.}
The contextual linear bandit is more general and challenging than stochastic 
linear bandits, because the decision sets can vary for each $t$ and~$m$.
In this setting, \citet{korda2016distributed} considered a P2P network and 
proposed the \texttt{DCB} algorithm based on the \texttt{OFUL} algorithm in 
\citet{abbasi2011improved}.
\citet{wang2019distributed} considered both star-shaped and P2P communication 
networks and achieved the near-optimal $\tilde O(d\sqrt{T})$ regret in the 
synchronous setting.\footnote{In the original paper of \citet{wang2019distributed}, 
the regret bound is expressed as $\tilde O(d\sqrt{MT})$. 
The $T$ in their paper is equivalent to the $T_m$ in ours, so their $d\sqrt{MT}$ 
should be understood as $d\sqrt{T}$ under our notation.}
\citet{dubey2020differentially} further introduced the differential privacy 
guarantee into the setting of \citet{wang2019distributed}.
\citet{li2022communication} extended distributed contextual linear bandits to 
generalized linear bandits \citep{filippi2010parametric,jun2017scalable} in the 
synchronous setting. 
\citet{li2021asynchronous} proposed the first asynchronous algorithm for federated 
contextual linear bandits with the star-shaped graph and achieve near-optimal 
$\tilde O(d\sqrt{T})$ regret. 
However, their setting is different from ours in two aspects: (1) the upload 
triggered by an agent will lead the server to trigger download possibly for all 
the agents in their setting. 
In contrast, the upload triggered by an agent will only lead to download to the 
same agent in our setting; (2) their regret guarantee relies on a stringent 
regularity assumption on the contexts, which basically requires the contexts to 
be stochastic. 
As a comparison, the contexts in our setting can be even adversarial, which is 
exactly the setting of contextual linear bandits \citep{abbasi2011improved,li2019nearly}.
This difference in the setting makes our algorithm a truly asynchronous contextual 
linear bandit algorithm but also makes our regret analysis more challenging. 

For better comparison, we compare our work with the most related contextual 
linear bandit algorithms in Table~\ref{table: comparison with baseline}. 

\begin{table}[t]
\centering
\begin{tabular}{ccccc}
\toprule
Setting    & Algorithm     & Regret  & Communication & Low-switching  \\
\midrule
\multirow{2}{*}{Single-agent} & \texttt{OFUL}   & \multirow{2}{*}{$d\sqrt{T\log T}$}  
& \multirow{2}{*}{N/A} & \multirow{2}{*}{\ding{51}}\\
& {\small\citep{abbasi2011improved}}& & \\
\hline
Federated & \texttt{DisLinUCB}  & \multirow{2}{*}{$d \sqrt{T}\log^2 T$} 
& \multirow{2}{*}{$d M^{3/2}$} & \multirow{2}{*}{\ding{56}} \\(Sync.) 
& \citep{wang2019distributed} & &\\ 
\hline
Federated & \texttt{Async-LinUCB} 
& \multirow{2}{*}{$d\sqrt{T} \log T$} & \multirow{2}{*}{$d M^2\log T$} 
& \multirow{2}{*}{\ding{56}} \\
(Async.)& \citep{li2021asynchronous}\footnotemark[2] & & \\
\hline
Federated & \texttt{FedLinUCB} & \multirow{2}{*}{ $d\sqrt{T} \log T$}
& \multirow{2}{*}{$d M^2\log T$} & \multirow{2}{*}{\ding{51}}\\
(Async.)& (Our Algorithm~\ref{alg: main}) & & \\
\bottomrule\\
\end{tabular}
\caption{Comparison of our result with baseline approaches for contextual linear bandits. 
Our result achieves near-optimal regret under low communication complexity. 
Here $d$ is the dimension of the context, $M$ is the number of agents, and 
$T = \sum_{m=1}^M T_m$ with each $T_m$ being the number of rounds that agent $m$ 
participates in. 
}
\label{table: comparison with baseline}
\end{table}

\section{Preliminaries}

\footnotetext[2]{The regret guarantee in \citet{li2021asynchronous} relies on a stringent regularity assumption on 
the contexts, which basically requires the contexts to be stochastic rather than 
adversarial as in standard contextual linear bandits. }

\paragraph{Federated contextual linear bandits.} 
We consider the federated contextual linear bandits as follows: 
At each round $t \in [T]$, an arbitrary agent $m_t \in [M]$ is active for participation. 
This agent receives a decision set $D_t \subset \RR^d$, picks an action $\xb_t \in D_t$,
and receives a random reward $r_t$. 
We assume that the reward $r_t$ satisfies $r_t = \langle \xb_t , \btheta^* \rangle + \eta_t$ 
for all $t \in [T]$, where $\eta_t$ is conditionally independent of $\xb_t$ given 
$\xb_{1:t-1}, m_{1:t},r_{1:t-1}$. 
More specifically, we make the following assumption on $\eta_t$, $\btheta^*$ and 
$D_t$, which is a standard assumption in the contextual linear bandit 
literature~\citep{abbasi2011improved,wang2019distributed,dubey2020differentially}.

\begin{assumption}\label{assump: sub gaussian and norm}
The noise $\eta_t$ is $R$-sub-Gaussian conditioning on $\xb_{1:t}$, $m_{1:t}$ 
and $r_{1:t-1}$, i.e.,
\begin{align}
\EE\big[e^{\lambda\eta_t} \mid \xb_{1:t}, m_{1:t},r_{1:t-1}\big] 
\leq \exp(R^2\lambda^2/2), \quad \textnormal{for any}\ \lambda \in \RR. \notag
\end{align}
We also assume that $\|\btheta^*\|_2 \leq S$ and $\|\xb\|_2\leq L$ for all 
action $\xb \in \cD_t$, for all $t \in [T]$. 
\end{assumption}

Notably, we assume $m_t$ can be arbitrary for all $t$, which basically says that 
each agent can decide whether and when to participate or not.\footnote{Without 
loss of generality, we can assume that it cannot happen that more than one 
agent participate at the same time. 
Therefore, there is always a valid order of participation indexed by $t\in[T]$.} 
Our setting is more general than the synchronous 
setting in \citet{wang2019distributed,dubey2020differentially}, which requires a 
round-robin participation of all agents. 

\paragraph{Learning objective.} 
The goal of the agents is to collaboratively minimize the cumulative regret defined as
\begin{align}\label{eq: def of regret}
\regret(T) &\coloneqq \sum_{t=1}^T \Big(\max_{\xb\in D_t}\langle \xb,\btheta^*\rangle 
- \langle \xb_t, \btheta^*\rangle\Big)
=\sum_{t=1}^T \langle \xb_{t}^* - \xb_{t}, \btheta^* \rangle.
\end{align}
To achieve such a goal, we allow the agents to collaborate via communication 
through the central server. 
Below we will explain the details of the communication model.

\paragraph{Communication model.} 
We consider a star-shaped communication network~\citep{wang2019distributed,dubey2020differentially} 
consisting of a central server and $M$ agents, where each agent can communicate 
with the server by uploading and downloading data. 
However, any pair of agents cannot communicate with each other directly.  
We define the communication complexity as the total number of communication 
rounds between agents and the server (counting both the uploads and 
downloads)~\citep{wang2019distributed,dubey2020differentially,li2021asynchronous}. 
For simplicity, we assume that there is no latency in the communication channel. 

We consider the asynchronous setting, where the communication protocol satisfies: 
(1) each agent can decide whether or not to participate in each round. 
Full participation is not required, which allows temporarily offline agents; 
and (2) the communication between each agent and the server is asynchronous and 
independent of other agents without mandatory download by the server. 

\paragraph{Switching cost.}
The notion of switching cost in online learning and bandits refers to the number 
of times the agent switches its policy (i.e., decision rule) 
\citep{kalai2005efficient,abbasi2011improved, dekel2014bandits, ruan2021linear}.
In the context of linear bandits, it corresponds to the number of times the agent 
updates its policy of selecting an action from the decision set \citep{abbasi2011improved}.
Algorithms with low switching cost are preferred in practice since each policy 
switching might cause additional computational overhead.

\section{The Proposed Algorithm}\label{sec:alg}
We propose a simple algorithm based on the principle of optimism that enables 
collaboration among agents through asynchronous communications with the central server.
The main algorithm is displayed in Algorithm~\ref{alg: main}.
For clarity, we first summarize the related notations in Table~\ref{tab:notation}.
\begin{wrapfigure}{L}{0.5\textwidth}
    \vskip -0.1in
\begin{minipage}[t]{0.5\textwidth}
\begin{table}[H]
    \centering
    \begin{tabular}{c|c} 
    \hline\hline
    Notation & Meaning \\ 
    \hline
    $\hat\btheta_{m,t}$ & estimate of $\btheta^*$\\
    $\bSigma_{m,t}, \bbb_{m,t}$ & data used to compute $\hat{\btheta}_{m,t}$\\ 
    $\bSigma^\loc_{m,t}, \bbb^\loc_{m,t}$ & local data for agent $m$\\
    $\bSigma^\ser_{m,t}, \bbb^\ser_{m,t}$ & data stored at the server\\
    \hline\hline
    \end{tabular}
    \vspace{0.2cm}
    \caption{Notations used in Algorithm~\ref{alg: main}.}
    \label{tab:notation}
\end{table}
    \end{minipage}
\end{wrapfigure}

Specifically, in each round $t \in [T]$, agent $m_t$ participates and interacts 
with the environment~(Line~\ref{algmain:agent}).
The environment specifies the decision set $D_t$~(Line~\ref{algmain:decision_set}), 
and the agent $m_t$ selects the action based on its current optimisitic estimate 
of the reward~(Line~\ref{algmain:optimistic_decision}).
Here the bonus term $\beta\|\xb\|_{\bSigma_{m_t,t}^{-1}}$ reflects the uncertainty 
of the estimated reward $\la \hat\btheta_{m_t,t},\xb\ra$ and encourages exploration.
After receiving the true reward $r_t$ from the environment, agent $m_t$ then 
updates its local data~(Line~\ref{algmain:local_update}).

The key component of the algorithm is the matrix determinant-based 
criterion~(Line~\ref{algmain:criterion}), which evaluates the information 
accumulated in current local data.
If the criterion is satisfied, it suggests that the local data would help 
significantly reduce the uncertainty of estimating the model $\btheta^*$.
Therefore, agent $m_t$ will share its progress by uploading the local data to 
the server~(Line~\ref{algmain:upload}) so that it can benefit other agents. 
Then the server updates the global data accordingly~(Line~\ref{algmain:update_server}). 
Afterwards, agent $m_t$ downloads the latest global data from the 
server~(Line~\ref{algmain:download}) and updates its local data and 
model~(Line~\ref{algmain:estimate1}-\ref{algmain:estimate2}).
If the criterion in Line~\ref{algmain:criterion} is not met, then the 
communication between the agent and the server will not be triggered, and the 
local data remains unchanged for agent $m_t$~(Line~\ref{algmain:no_update}).
Finally, all the other inactive agents remain unchanged~(Line~\ref{algmain:inactive_agents}).

\begin{algorithm}[t]
	\caption{Federated linear UCB (\texttt{FedlinUCB})}
	\label{alg: main}
	\begin{algorithmic}[1]\label{algorithm:1}
	\STATE Initialize $\bSigma_{m, 1} = \bSigma^\ser_1 = \lambda \Ib$, 
    $\hat\btheta_{m,1}=0$, $\bbb_{m,0}^\loc =0$ and $\bSigma^\loc_{m,0}=0$ for all $m \in [M]$
	\FOR{round $t = 1, 2, \dots, T$}
	\STATE Agent $m_t$ is active \alglinelabel{algmain:agent}
	\STATE Receive $D_t$ from the environment
	\alglinelabel{algmain:decision_set}
	\STATE Select $\xb_t \leftarrow \argmax_{\xb \in D_t} \langle 
    \hat\btheta_{m_t,t},\xb\rangle + \beta\|\xb\|_{\bSigma_{m_t,t}^{-1}}$ 
    \alglinelabel{algmain:optimistic_decision}
    {\color{blue}\hfill \texttt{/* Optimistic decision */}}
	\STATE Receive $r_t$ from environment
	\alglinelabel{algmain:reward}
	\STATE $\bSigma^\loc_{m_t, t}\leftarrow \bSigma^\loc_{m_t, t-1} 
    + \xb_t \xb_t^\top$, \quad $\bbb^\loc_{m_t, t} \leftarrow 
    \bbb^\loc_{m_t, t-1} + r_t \xb_t$
    \alglinelabel{algmain:local_update} \hfill{\color{blue}\texttt{/* Local update */}}
	\IF{$\det(\bSigma_{m_t, t} + \bSigma^\loc_{m_t, t}) > (1 + \alpha)
    \det(\bSigma_{m_t, t})$} \alglinelabel{algmain:criterion} 
    \STATE Agent $m_t$ sends $\bSigma^\loc_{m_t, t}$ and $\bbb^\loc_{m_t, t}$ to 
    server \alglinelabel{algmain:upload} \hfill{\color{blue}\texttt{/* Upload */}}
	\STATE $\bSigma^\ser_t \leftarrow \bSigma^\ser_t + \bSigma^\loc_{m_t, t}$, \quad
    $\bbb^\ser_t \leftarrow \bbb^\ser_t  + \bbb^\loc_{m_t, t}$\alglinelabel{algmain:update_server}
    \hfill{\color{blue}\texttt{/* Global update */}}
    \STATE $\bSigma^\loc_{m_t, t} \leftarrow 0$,\quad $\bbb^\loc_{m_t, t} \leftarrow 
    0$ \alglinelabel{algmain:reset_local}
    \STATE Server sends $\bSigma^\ser_t$ and $\bbb^\ser_t$ back to agent $m_t$ \alglinelabel{algmain:download} \hfill{\color{blue}\texttt{/* Download */}}
    \STATE $\bSigma_{m_t, t+1} \leftarrow \bSigma^\ser_t$,\quad $\bbb_{m_t, t+1} 
    \leftarrow \bbb^\ser_t$ \alglinelabel{algmain:estimate1}
    \STATE $\hat\btheta_{m_t, t+1} \leftarrow \bSigma_{m_t, t+1}^{-1} \bbb_{m_t, t+1}$ \alglinelabel{algmain:estimate2} \hfill{\color{blue}\texttt{/* Compute estimate */}}
    \ELSE
    \STATE $\bSigma_{m_t, t+1} \leftarrow \bSigma_{m, t}$,\quad $\bbb_{m_t, t+1} 
    \leftarrow \bbb_{m, t}$, \quad $\hat\btheta_{m_t, t+1} \leftarrow \hat\btheta_{m_t, t}$\alglinelabel{algmain:no_update}
	\ENDIF
	\FOR{other inactive agent $m\in [M] \setminus \{m_t\}$}
	\STATE $\bSigma_{m, t+1} \leftarrow \bSigma_{m, t}$,\quad $\bbb_{m, t+1} 
    \leftarrow \bbb_{m, t}$, \quad$\hat\btheta_{m, t+1} \leftarrow \hat\btheta_{m,t}$ 
    \alglinelabel{algmain:inactive_agents}
	\ENDFOR
	\ENDFOR
	\end{algorithmic}
\end{algorithm}

Note that in Algorithm~\ref{alg: main}, the communication between the agent and 
the server (Line~\ref{algmain:upload} and ~\ref{algmain:download}) involves only 
the active agent in that round, which is completely independent of other agents.
This is in sharp contrast to existing algorithms.
For example, in the main algorithm in \citet{li2021asynchronous}, upload by any 
agent may trigger other agents to download the latest data, while our algorithm 
does not mandate this.
On the other hand, many existing algorithms for multi-agent settings 
(e.g., \citet{wang2019distributed}) require all agents to interact with the 
environment in each round, which essentially require full participation of all the agents.   

The determinant-based criterion in Line~\ref{algmain:criterion} has been a 
long-standing design trick in contextual linear bandits that can help address 
the issue of unknown time horizon and reduce the switching cost 
\citep{abbasi2011improved,ruan2021linear}.
For multi-agent bandits, such a criterion has also been used to control the 
communications complexity \citep{wang2019distributed,dubey2020differentially,li2021asynchronous}.
This is because the need for policy switching or communication essentially 
reflects the same fact: enough information has been collected and it is time to 
update the (local) model.
Therefore, achieving low communication complexity and low switching cost are 
unified in our \texttt{FedlinUCB} algorithm in the sense that the communication 
complexity is exactly twice the switching cost.
Furthermore, using lazy update makes our algorithm amenable for analysis, which 
will be clear later in Section~\ref{sec:proof}.
In addition, we leave $\alpha>0$ as a tuning parameter as it controls the 
trade-off between the regret and the communication complexity.

\section{Theoretical Results}\label{sec: theory}
We now present our main result on the theoretical guarantee of Algorithm~\ref{alg: main}.

\begin{theorem}\label{thm:regret}
Under Assumption~\ref{assump: sub gaussian and norm} for Algorithm~\ref{algorithm:1}, 
if we set the confidence radius $\beta= \sqrt{\lambda} 
S + (\sqrt{1+M\alpha}+M\sqrt{2\alpha})
\Big(R\sqrt{d\log \Big(\big(1+TL^2/(\min(\alpha,1)\lambda)\big)/\delta\Big)}
+\sqrt{\lambda}S\Big)$, then with probability at least $1-\delta$, the regret in 
the first $T$ rounds can be upper bounded by
\begin{align*}
\mathrm{Regret}(T)\leq 2dSLM\log(1+TL^2/\lambda)
+2\sqrt{2(1+M\alpha)}\beta\sqrt{2dT\log(1+TL^2/\lambda)}.
\end{align*}
Moreover, the communication complexity and switching cost are both bounded by 
$2\log2 \cdot d(M+1/\alpha) \log(1+TL^2/(\lambda d))$.
\end{theorem}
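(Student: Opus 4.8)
The plan is to follow the standard optimism-based regret decomposition of \texttt{OFUL}~\citep{abbasi2011improved}, but with careful attention to the asynchronous structure. First I would set up the right confidence guarantee: define, for each round $t$, the ``virtual global'' covariance $\bar\bSigma_t = \lambda\Ib + \sum_{s\le t}\xb_s\xb_s^\top$ accumulating \emph{all} actions played so far, and compare it to the matrix $\bSigma_{m_t,t}$ actually used by the active agent. The latter consists of the server's data at the last download plus agent $m_t$'s unshared local data, so it is missing some other agents' data as well as $m_t$'s own newest local contributions; the key structural lemma is that $\bSigma_{m_t,t} \preceq \bar\bSigma_t \preceq (1+M\alpha)\,\bSigma_{m_t,t}$, where the right inequality is exactly where the determinant criterion in Line~\ref{algmain:criterion} enters. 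Indeed, between two communications of any fixed agent the determinant of its matrix grows by at most a factor $1+\alpha$ (since the criterion was not triggered), and there are $M$ agents, so the total ``gap'' between what agent $m_t$ sees and the full global information is multiplicatively at most $1+M\alpha$ in determinant, which translates into the spectral bound above. This is essentially the ``local-to-global concentration'' device the introduction promised, and I expect establishing it rigorously — in particular handling the ill-defined filtration so that a valid self-normalized martingale bound still applies to $\bar\bSigma_t$ — to be the main obstacle.

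Granting the confidence event $\|\hat\btheta_{m_t,t} - \btheta^*\|_{\bSigma_{m_t,t}} \le \beta$ with probability $1-\delta$ (which is where the stated form of $\beta$ comes from: the $\sqrt{1+M\alpha}+M\sqrt{2\alpha}$ factor absorbs both the matrix-mismatch distortion and the extra additive error from the missing local blocks), the per-round regret bound is routine. By optimism, $\langle\xb_t^*,\btheta^*\rangle \le \langle\hat\btheta_{m_t,t},\xb_t\rangle + \beta\|\xb_t\|_{\bSigma_{m_t,t}^{-1}}$, so the instantaneous regret is at most $2\beta\|\xb_t\|_{\bSigma_{m_t,t}^{-1}}$, clipped at $2SL$. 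Using $\bar\bSigma_t \preceq (1+M\alpha)\bSigma_{m_t,t}$ gives $\|\xb_t\|_{\bSigma_{m_t,t}^{-1}} \le \sqrt{1+M\alpha}\,\|\xb_t\|_{\bar\bSigma_t^{-1}}$, and now the standard elliptical-potential / Cauchy–Schwarz argument applies to the $\|\xb_t\|_{\bar\bSigma_t^{-1}}$ terms, since those are defined along the true global order: $\sum_{t=1}^T \min(1,\|\xb_t\|_{\bar\bSigma_t^{-1}}^2) \le 2d\log(1+TL^2/(\lambda d))$, hence $\sum_t \min(1,\|\xb_t\|_{\bar\bSigma_t^{-1}}) \le \sqrt{2dT\log(1+TL^2/\lambda)}$. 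Splitting the rounds into those with small $\|\xb_t\|_{\bSigma_{m_t,t}^{-1}}$ (bounded via the potential) and those where clipping is active (the $2dSLM\log(1+TL^2/\lambda)$ term, where the extra $M$ comes from counting such ``large-bonus'' rounds across all agents using a per-agent potential argument) yields exactly the claimed regret.

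For the communication complexity and switching cost: since each communication event coincides with a policy switch for the triggering agent (Lines~\ref{algmain:upload}–\ref{algmain:estimate2}) and contributes exactly two communication rounds (one upload, one download), it suffices to bound the number of times the criterion in Line~\ref{algmain:criterion} fires. I would argue per agent. Fix agent $m$ and consider the sequence of its communication times; by a telescoping-of-determinants argument, whenever agent $m$ triggers, the determinant of the \emph{global} server matrix $\bSigma^\ser$ (which absorbs agent $m$'s uploaded block) increases. More precisely, group the triggers: those that increase $\det(\bSigma^\ser)$ by at least a factor $2$ can number at most $d\log_2(1 + TL^2/(\lambda d))$ in total (since $\det$ starts at $\lambda^d$ and ends at most $(\lambda + TL^2/d)^d$), and between two consecutive ``doubling'' events each agent can trigger at most $1/\alpha$ times — because each of its triggers multiplies its \emph{own} running matrix determinant by at least $1+\alpha$ while the global determinant stays within a factor $2$, and $(1+\alpha)^{1/\alpha}\ge 2$ forces at most $1/\alpha$ such triggers. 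Summing over the $\le d\log_2(\cdots)$ doubling epochs and the $M$ agents gives $O\big(d(M + 1/\alpha)\log(1+TL^2/(\lambda d))\big)$ triggers, and doubling it (upload + download) gives the stated bound. The main subtlety here, as with the regret, is making the ``global determinant'' bookkeeping consistent despite the asynchronous, interleaved updates; once the right potential is identified the counting is elementary.
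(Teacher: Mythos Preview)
Your proposal has the right overall architecture, but the key structural claim $\bar\bSigma_t \preceq (1+M\alpha)\,\bSigma_{m_t,t}$ is false as stated, and this is not a technicality you can patch later. Consider two agents: agent~$1$ triggers and downloads at round~$\tau$, then agent~$2$ plays many rounds, triggering and uploading repeatedly so that $\det(\bSigma^\ser)$ grows by an arbitrarily large factor; then agent~$1$ becomes active again at round~$t$ without triggering. Now $\bSigma_{1,t}$ is still $\bSigma^\ser_\tau$ (agent~$1$ has not downloaded since), while $\bar\bSigma_t$ contains all of agent~$2$'s uploaded data, so the ratio is unbounded. Your justification (``between two communications of any fixed agent the determinant of its matrix grows by at most $1+\alpha$'') bounds each agent's local block $\bSigma^\loc_{m',t}$ against \emph{that agent's} matrix $\bSigma_{m',t}$, not against the active agent's $\bSigma_{m_t,t}$; those can differ arbitrarily.

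The paper repairs this with two devices you are missing. First, a \emph{reordering} argument: since the bonus $\|\xb_t\|_{\bSigma_{m_t,t}^{-1}}$ depends only on $\bSigma_{m_t,t}$, which is frozen between agent~$m_t$'s communications, one may permute the rounds so that between any two consecutive communications only a single agent is active. In this reordered sequence the active agent's matrix \emph{is} the current server matrix, and then the determinant criterion applied to every agent's local block does give $\bar\bSigma_t \preceq (1+M\alpha)\bSigma_{m_t,t}$ --- this is exactly the hypothesis under which Lemma~\ref{lemma:covariance_comparison}\,\eqref{new2} is stated. Second, the communication rounds themselves are handled separately: within each doubling epoch of $\det(\bSigma^\all)$, the \emph{first} communication of each agent in that epoch is bounded trivially by $2SL$ (this is the source of the $2dSLM\log(1+TL^2/\lambda)$ term), while subsequent communications in the same epoch are compared to $\bSigma^\all$ at the epoch start via Lemma~\ref{lemma:det}, picking up only an extra $\sqrt{2}$. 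Your ``clipping'' paragraph gestures at a split but does not identify which rounds are bad or why there are at most $dM\log(1+TL^2/\lambda)$ of them.

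A smaller issue in the communication count: you write ``each agent can trigger at most $1/\alpha$ times'' per doubling epoch, which would give $M/\alpha$ per epoch, not the claimed $M+1/\alpha$. The correct argument is global, not per-agent: once an agent has triggered once in the epoch, each subsequent trigger by that agent increases $\det(\bSigma^\ser)$ \emph{additively} by at least $\alpha\det(\bSigma^\ser_{\tau_i})$ (via Lemmas~\ref{lem:det1}--\ref{lem:det2}), and since the total additive growth within the epoch is at most $\det(\bSigma^\ser_{\tau_i})$, there are at most $1/\alpha$ such non-first triggers summed over all agents, plus $M$ first triggers.
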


\begin{remark}
Theorem~\ref{thm:regret} suggests that if we set the parameters $\alpha = 1/M^2$ 
and $\lambda=1/S^2$ in Algorithm~\ref{algorithm:1}, then its regret is 
bounded by $\tilde  O(Rd\sqrt{T})$ and the corresponding communication complexity 
and switching cost are both bounded by $\widetilde O(dM^2)$.
This choice of parameters yields the regret bound and the communication 
complexity presented in Table~\ref{table: comparison with baseline}.
\end{remark}

As a complement, we also provide a lower bound for the communication complexity 
as stated in the following theorem.
See Appendix~\ref{apdx:lower_bound} for the proof.

\begin{theorem}\label{thm:low}
For any algorithm \textbf{Alg} with expected communication complexity less than 
$ O(M/\log (T/M))$, there exist a linear bandit instance with $R=L=S=1$ such that 
for $T\ge Md$, the expected regret for algorithm \textbf{Alg} is at least 
$\Omega(d\sqrt{MT})$.
\end{theorem}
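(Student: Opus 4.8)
The plan is to prove Theorem~\ref{thm:low} by a reduction to the single-agent lower bound combined with an information-theoretic argument showing that too little communication forces most agents to behave nearly independently. First I would fix a communication budget $B = c\,M/\log(T/M)$ for some small constant $c$, and consider the hard instance family used in the standard $\Omega(d\sqrt{n})$ lower bound for linear bandits with $n$ rounds (e.g., the family of $\btheta^*$ supported on $\{\pm \Delta\}^d$ over a fixed hypercube-like decision set, as in \citet{abbasi2011improved,dani2008stochastic}). The key combinatorial observation is that with only $B$ total communication rounds across the whole horizon, at most $2B$ agent-server interactions occur, so by averaging there is a large subset $\mathcal{S}\subseteq[M]$ of agents (of size at least $M/2$, say) that together participate in a constant fraction of the $T$ rounds but whose \emph{local} models are updated from the server only $o(M)$ times in total. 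The plan is to assign each round to agents in a balanced round-robin-like fashion so that a typical agent in $\mathcal{S}$ runs for $\Theta(T/M)$ rounds while essentially never receiving fresh information from the others.

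Next I would make the independence precise: condition on the (random) transcript of all communication. Between two consecutive downloads, an agent in $\mathcal{S}$ is simply running a single-agent bandit algorithm on a prior that has been only mildly updated. I would partition each such agent's rounds into at most (number of its downloads $+1$) contiguous blocks; since the total number of downloads over $\mathcal{S}$ is $o(M)$, all but $o(M)$ agents have a block of length $\Theta(T/M)$ during which they receive \emph{no} communication at all. On each such block, the agent is provably a single-agent learner, so the single-agent minimax lower bound gives expected regret $\Omega(d\sqrt{T/M})$ on that block. Summing over the $\Theta(M)$ such agents yields total expected regret $\Omega\big(M \cdot d\sqrt{T/M}\big) = \Omega(d\sqrt{MT})$, which is the claimed bound. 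The condition $T \ge Md$ is exactly what is needed to ensure each block has length $\ge d$, so the single-agent lower bound (which requires horizon $\ge d$) applies.

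The main obstacle I anticipate is handling the adaptivity of the communication schedule: the algorithm may choose \emph{when} and \emph{which} agent communicates based on observed rewards, so "the set $\mathcal{S}$ of under-communicated agents" is itself a random object, and an agent could strategically concentrate its communication into exactly the rounds where a hard sub-instance is being learned. To deal with this I would argue at the level of expectations using a counting/pigeonhole bound that holds pathwise: no matter the schedule, the number of (agent, block) pairs with block length $\ge T/(4M)$ and zero intra-block communication is at least $\Omega(M)$ whenever the total communication is $o(M/\log(T/M))$ — here the $\log(T/M)$ factor enters precisely because one download can shorten at most $O(\log(T/M))$ "long" blocks into short ones in the worst case when block lengths form a geometric-like progression. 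Then I would lower bound the expected per-block regret using a change-of-measure (Pinsker / KL) argument that, crucially, is valid even when the block endpoints are stopping times, by appealing to the standard single-agent lower bound in its stopping-time-robust form. Assembling these pieces gives the theorem; the bookkeeping of how a single communication event can "rescue" multiple blocks is the delicate quantitative step and is where the $1/\log(T/M)$ slack is consumed.
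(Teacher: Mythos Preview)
Your intuition is correct that low communication forces many agents to behave like single-agent learners, and that the single-agent $\Omega(d\sqrt{n})$ lower bound should then kick in. However, the way you propose to handle the adaptivity of the communication schedule has a genuine gap, and your explanation of where the $\log(T/M)$ factor originates is incorrect.

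The core difficulty you identify---that the set of under-communicated agents is random and correlated with the rewards---cannot be resolved by a pathwise block-counting argument plus a ``stopping-time-robust'' lower bound. No such lower bound is standard: the usual $\Omega(d\sqrt{n})$ result is for a \emph{fixed} horizon, and if an agent terminates its ``single-agent'' phase at a data-dependent time, one cannot directly conclude that its regret up to that time is large. Concretely, even if on every sample path at least $M-K$ agents never communicate (where $K$ is the realized communication count), you cannot simply sum $cd\sqrt{T/M}$ over those agents, because which agents are silent depends on the very rewards you are averaging over. Your claim that ``one download can shorten at most $O(\log(T/M))$ long blocks'' is also false: a download involves a single agent and creates exactly one new block boundary for that agent, so the $\log(T/M)$ factor cannot arise this way.

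The paper bypasses all of this with an auxiliary-algorithm trick. Define \textbf{Alg1} to coincide with \textbf{Alg} until the agent's first communication, then discard everything and run \texttt{OFUL} for the remaining rounds. Crucially, \textbf{Alg1} is a bona fide single-agent algorithm over the \emph{full} horizon $T/M$ (it never uses server data), so the fixed-horizon lower bound gives $\EE[\regret_{m,\textbf{Alg1}}]\ge cd\sqrt{T/M}$ for every $m$, hence $\EE[\regret_{\textbf{Alg1}}]\ge cd\sqrt{MT}$. On the other hand, \textbf{Alg1} and \textbf{Alg} agree before the first communication, and afterwards \textbf{Alg1} runs \texttt{OFUL}, whose regret is at most $Cd\sqrt{T/M}\log(T/M)$ for every instance. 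Writing $\delta_m=\Pr[\text{agent }m\text{ ever communicates}]$ and using $\sum_m\delta_m\le B$ (the expected communication), one gets
\[
\EE[\regret_{\textbf{Alg}}]\ \ge\ cd\sqrt{MT}\ -\ B\cdot Cd\sqrt{T/M}\,\log(T/M),
\]
so the $\log(T/M)$ in the threshold $B=O(M/\log(T/M))$ comes precisely from the \texttt{OFUL} \emph{upper} bound, not from block geometry. This coupling of a single-agent lower bound with a single-agent upper bound is the key idea missing from your plan.
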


\begin{remark}
Suppose each agent runs the \texttt{OFUL} algorithm \citep{abbasi2011improved} 
separately, then each agent $m\in[M]$ admits an $\tilde  O(d\sqrt{T_m})$ regret, 
where $T_m$ is the number of rounds that agent $m$ participates in. 
Thus the total regret of $M$ agents is upper bounded by $\sum_{m=1}^M  
\tilde  O(\sqrt{T_m}) =  \tilde  O(d\sqrt{MT})$. 
Theorem~\ref{thm:low} implies that for any algorithm \textbf{Alg}, if its 
communication complexity is less than $O(M/\log (T/M))$, then its regret cannot 
be better than naively running $M$ independent \texttt{OFUL} algorithms. 
In other words, Theorem~\ref{thm:low} suggests that in order to perform better 
than the single-agent algorithm through collaboration, an $\Omega(M)$ 
communication complexity is necessary.
\end{remark}

\section{Overview of the Proof}\label{sec:proof}

\begin{wrapfigure}{L}{0.5\textwidth}
\begin{minipage}[t]{0.5\textwidth}
\vspace{-0.3in}
\begin{figure}[H]
    \centering
    \includegraphics[width=0.9\textwidth]{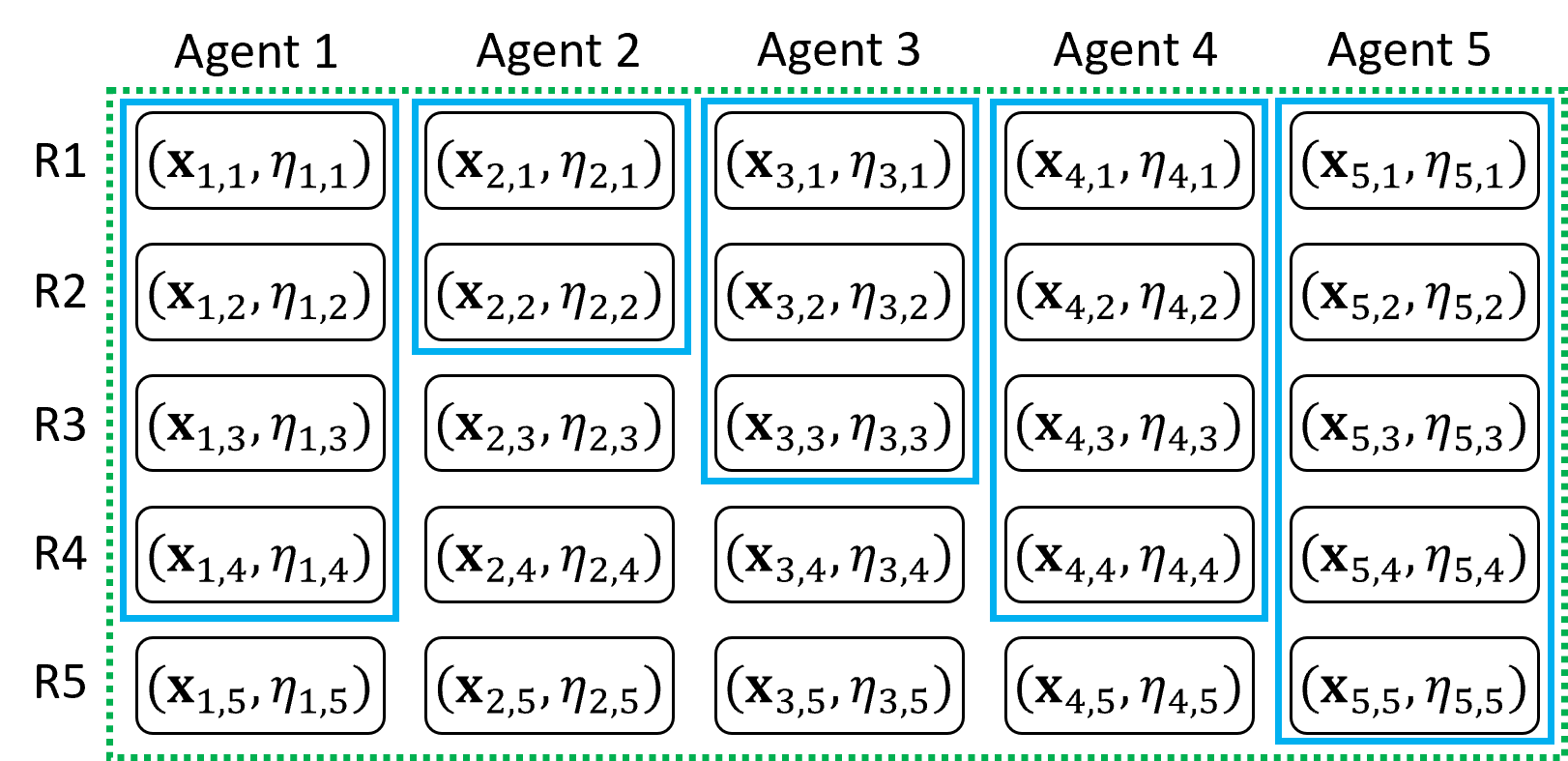}

    \caption{Illustration of ill-defined filtration.}
    \label{fig:filtration}
\end{figure}
\end{minipage}
\end{wrapfigure}

When analyzing the performance of \texttt{FedLinUCB}, we face a unique challenge 
caused by the asynchronous communication, as illustrated in Figure~\ref{fig:filtration}. 
Here $(\xb_{m,t},\eta_{m,t})$ denotes the decision and the noise for agent $m$ 
in its own $t$-th round.
Specifically, in the synchronous setting, the filtration is generated by all the 
data collected by all agents, i.e., $\cF_5 = \sigma\{\xb_{m,t}, \eta_{m,t}\}_{t=1,m=1}^{5,5}$, 
as marked by the green dashed rectangle.
This kind of filtration is well-defined since all agents share their data with 
each other at the end of each round.
In sharp contrast, in our asynchronous setting, the data at the server can be 
generated by an irregular set of data from the agents, as marked by the blue 
rectangles.
Such data pattern can be arbitrary and depends on the data collected in all 
previous rounds, which prevents us from defining a fixed filtration as we can do 
in the synchronous setting.
Since the application of standard martingale concentration inequalities relies on 
the well-defined filtration, they cannot be directly applied to our asynchronous 
setting.

To circumvent the above issue, we need to analyze the concentration property of 
the local data for each agent and then relate it to the concentration of the 
global data, so that we can control the sum of the bonuses and hence the regret.
This requires a careful quantitative comparison of the local and global data 
covariance matrices, which is enabled by our design of determinant-based criterion.
The details will be further explained in Section~\ref{sec:regret_analysis}.
Next, we present the key ingredients in the proof of Theorem~\ref{thm:regret}.

\begin{remark}
Recall the notations in Table~\ref{tab:notation},
where the values of those matrices and vectors might change within each round.
To eliminate the possible confusion, from now on we follow the convention that
all matrices and vectors in the analysis correspond to their values at the end 
of each round in Algorithm~\ref{alg: main}.
\end{remark}

\subsection{Analysis for communication complexity and switching cost}
We first analyze the communication complexity and switching cost of Algorithm~\ref{alg: main}.
For each $i \geq 0$, we define
\begin{align}\label{eq:epoch}
    \tau_i = \min \{t \in [T] \mid \det(\bSigma_t^\ser) \geq 2^i\lambda^d\}.    
\end{align}
We divide the set of all rounds into epochs $\{\tau_i, \tau_{i}+1, \ldots, \tau_{i+1}-1\}$ 
for each $i \geq 0$.
Then the communication complexity \emph{within each epoch} can be bounded using 
the following lemma.

\begin{lemma}\label{lem:communication_cost}
Under the setting of Theorem~\ref{thm:regret}, for each epoch from round 
$\tau_i$ to round $\tau_{i+1}-1$, the number of communications in this 
epoch is upper bounded by $2(M+1/\alpha)$.
\end{lemma}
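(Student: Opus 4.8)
\textbf{Proof proposal for Lemma~\ref{lem:communication_cost}.}

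The plan is to bound, within a fixed epoch $[\tau_i, \tau_{i+1}-1]$, the number of times the determinant-based criterion in Line~\ref{algmain:criterion} is triggered. I would split the communications in the epoch into two groups according to the size of the local determinant ratio at the moment of triggering, and bound each group separately. Concretely, for a communication triggered at round $t$ by agent $m_t$, let $\bSigma = \bSigma_{m_t,t}$ denote the agent's synchronized matrix (at the start of its current local phase) and $\bSigma^\loc = \bSigma_{m_t,t}^\loc$ its accumulated local data; the criterion says $\det(\bSigma + \bSigma^\loc) > (1+\alpha)\det(\bSigma)$. I would call the communication of \emph{type I} if additionally $\det(\bSigma + \bSigma^\loc) > 2\det(\bSigma)$, and \emph{type II} otherwise, i.e. $(1+\alpha)\det(\bSigma) < \det(\bSigma + \bSigma^\loc) \le 2\det(\bSigma)$.

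For type-I communications, observe that each one at least doubles the determinant of the server matrix: when agent $m_t$ uploads, $\bSigma_t^\ser$ increases to at least $\bSigma_t^\ser + \bSigma^\loc \succeq \bSigma + \bSigma^\loc$ in the relevant directions, and since $\bSigma \preceq \bSigma_t^\ser$ we get (using monotonicity of $\det$ on the Loewner order, as in the standard potential argument) that $\det(\bSigma_t^\ser)$ grows by a factor of at least $\det(\bSigma+\bSigma^\loc)/\det(\bSigma) > 2$. But within one epoch the server determinant only ranges over $[2^i\lambda^d, 2^{i+1}\lambda^d)$ by the definition of $\tau_i$ in \eqref{eq:epoch}, so it can double at most once; hence there is at most one type-I communication per epoch — or, to be safe with constants, at most a constant number, which I would fold into the $2(M+1/\alpha)$ bound. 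For type-II communications I would argue per agent: fix an agent $m$ and consider its successive local phases within the epoch that end in a type-II upload. Each such phase multiplies $\det(\bSigma_{m,\cdot})$ (the agent's view, which equals the server matrix at its last sync) by a factor in $(1+\alpha, 2]$. Chaining these over the epoch, the product of these factors is at most $\det(\bSigma_t^\ser$ at epoch end$)/\det(\bSigma_t^\ser$ at epoch start$) \le 2$; since each factor exceeds $1+\alpha$, the number of type-II phases for agent $m$ within the epoch is at most $\log 2 / \log(1+\alpha) \le (\log 2)/\alpha \cdot$const, and a cleaner bound $\le 1 + 1/\alpha$ follows from $\log(1+\alpha)\ge \alpha - \alpha^2/2$ type estimates or by using $(1+\alpha)^k \le 2$. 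Summing over all $M$ agents gives at most $M(1+1/\alpha)$, and I would combine with the type-I count and absorb constants to reach the stated $2(M+1/\alpha)$.

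The main obstacle I anticipate is the bookkeeping around \emph{which} determinant is being compared to \emph{which}: an agent's local matrix $\bSigma_{m,t}$ is only updated at its own communication rounds, so between two of agent $m$'s uploads the server matrix may have grown due to \emph{other} agents' uploads, meaning agent $m$'s synchronized $\bSigma_{m,t}$ jumps discontinuously relative to $\bSigma_t^\ser$. I need the telescoping argument for type-II phases to be robust to this — the key fact is that each time agent $m$ syncs, it adopts the \emph{current} server matrix, so its multiplicative gains are measured against an ever-increasing baseline, and the product of per-phase ratios for agent $m$ is still upper bounded by the total epoch growth ratio of the server determinant, which is at most $2$. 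Making this monotonicity precise (that $\det(\bSigma_{m,t})$ is nondecreasing and upper bounded by $\det(\bSigma_t^\ser)$ throughout, and that agent $m$'s contributed growth can be charged disjointly) is the delicate step; everything else is the routine elliptical-potential / determinant-trace manipulation already standard in \citep{abbasi2011improved,wang2019distributed}.
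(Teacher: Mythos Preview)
Your per-agent multiplicative telescoping is valid but too weak: it yields $\sum_m n_m \le M + M\cdot \log 2/\log(1+\alpha) \approx M + M/\alpha$, not $M + 1/\alpha$. When you bound, for each agent $m$ separately, the product of its type-II ratios by the server's total epoch growth $\le 2$, you are charging the \emph{same} global growth to every agent; summing these per-agent bounds over $m$ then inflates the count by a factor of $M$. The sentence ``absorb constants to reach the stated $2(M+1/\alpha)$'' does not go through: $M(1+1/\alpha) = M + M/\alpha$ exceeds $2(M+1/\alpha) = 2M+2/\alpha$ whenever $M\ge 3$.

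There is also a wrong step in your type-I argument. From $\bSigma \preceq \bSigma_t^\ser$ you cannot conclude that $\det(\bSigma_t^\ser+\bSigma^\loc)/\det(\bSigma_t^\ser) \ge \det(\bSigma+\bSigma^\loc)/\det(\bSigma)$; the inequality actually goes the \emph{other} way (adding a fixed PSD increment to a larger base yields a smaller multiplicative gain). In fact every non-first communication of an agent within the epoch is automatically type II, so the type-I/type-II split is not the right decomposition anyway.

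The paper's fix is to work \emph{additively} rather than multiplicatively, and to aggregate across all agents at once. Using two determinant inequalities of \citet{tie2011rearrangement} (Lemmas~\ref{lem:det1}--\ref{lem:det2}), one shows that for every communication at round $t'_j$ that is not the agent's first in the epoch,
\[
\det(\bSigma^\ser_{t'_j}) - \det(\bSigma^\ser_{t'_{j-1}}) \;\ge\; \det(\bSigma^\ser_{\tau_i}+\bSigma^\loc) - \det(\bSigma^\ser_{\tau_i}) \;>\; \alpha\,\det(\bSigma^\ser_{\tau_i}),
\]
where the first inequality comes from Lemma~\ref{lem:det1} (additive increment is monotone in the base) and the second from Lemma~\ref{lem:det2} (multiplicative ratio is antitone in the base, transferring the agent's $>1+\alpha$ ratio down to the epoch baseline $\bSigma^\ser_{\tau_i}$). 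Summing these additive increments over \emph{all} non-first communications in the epoch and using $\det(\bSigma^\ser_{\tau_{i+1}-1}) - \det(\bSigma^\ser_{\tau_i}) \le \det(\bSigma^\ser_{\tau_i})$ gives $\sum_m (n_m-1) \le 1/\alpha$, hence $\sum_m n_m \le M + 1/\alpha$ and the factor $2$ from upload$+$download yields the claim. The key idea you are missing is precisely this additive charging against a common baseline, which lets the $1/\alpha$ be shared across agents instead of replicated $M$ times.
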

\begin{proof}[Proof of Theorem~\ref{thm:regret}: communication complexity and switching cost.]
It suffices to bound the number of epochs.
By Assumption~\ref{assump: sub gaussian and norm}, we have $\|\xb_t\|_2 \leq L$ 
for all $t \in [T]$.
Since $\bSigma_T^\ser$ is positive definite, by the inequality of arithmetic and 
geometric means, we have
\begin{align*}
    \det(\bSigma_T^\ser) &\leq \bigg(\frac{\tr(\bSigma_T^\ser)}{d}\bigg)^d 
    \leq \bigg(\frac{1}{d}\tr\bigg(\lambda \Ib + \sum_{t=1}^T \xb_t\xb_t^\top\bigg)\bigg)^d\\
    &= \bigg(\lambda + \frac{1}{d} \sum_{t=1}^T \|\xb_t\|_2^2\bigg)^d
    \leq \lambda^d \bigg(1 + \frac{TL^2}{\lambda d}\bigg)^d.
\end{align*}
Then recalling the definition of epochs based on~\eqref{eq:epoch}, we have 
\begin{align*}
    \max\{i\geq 0 \mid \tau_i \neq \emptyset\} = \log_2 \frac{\det(\bSigma_T^\ser)}{\lambda^d}
    \leq \log2 \cdot d \log\bigg(1 + \frac{TL^2}{\lambda d}\bigg).
\end{align*} 
Therefore, the total number of epochs is bounded by 
$\log 2 \cdot d \log(1 + TL^2/(\lambda d))$.
Now applying Lemma~\ref{lem:communication_cost}, the total communication 
complexity is bounded by $2\log2 \cdot d(M+1/\alpha) \log(1+TL^2/(\lambda d))$.
Note that in Algorithm~\ref{alg: main}, each agent only switch its policy after 
communicating with the server, so the switching cost is exactly equal 
to half of the communication complexity.
This finishes the proof for the claim on communication complexity and switching 
cost in Theorem~\ref{thm:regret}.
\end{proof}

\subsection{Analysis for regret upper bound}\label{sec:regret_analysis}
The regret analysis for Theorem~\ref{thm:regret} is much more involved, and it 
relies on a series of intermediate lemmas establishing the concentration.

\paragraph{Total information.}
We define the following auxiliary matrices and vectors that contain all the 
information up to round $t$:
\begin{align}
    \bSigma^\all_t = \lambda \Ib + \sum_{i=1}^t \xb_i \xb^{\top}_i, \qquad 
    \bbb^\all_t = \sum_{i=1}^t r_i \xb_i, \qquad
    \ub^\all_t = \sum_{i=1}^t \eta_i \xb_i,
\end{align} 
where $\eta_i \coloneqq r_i - \langle \xb_i, \btheta^* \rangle$ is a 
$R$-sub-Gaussian noise by Assumption~\ref{assump: sub gaussian and norm}.
In our setting, $\bSigma_t^\all, \bbb_t^\all, \ub_t^\all$ are not 
accessible by the agents due to asynchronous communication, and we only use them 
to facilitate the analysis.
With this notation, we can further define the following omnipotent estimate:
\begin{align}\label{eq:global_estimate}
    \hat{\btheta}_t^\all = (\bSigma_t^\all)^{-1} \bbb_t^\all. 
\end{align}
As a direct application of the self-normalized martingale concentration 
inequality~\citep{abbasi2011improved}, we have the following global confidence 
bound due to the concentration of $\bSigma_t^\all$ and $\bbb_t^\all$.

\begin{lemma}[Global confidence bound; Theorem 2, \citealt{abbasi2011improved}]
\label{lemma:global-concentration}
With probability at least $1-\delta$, for each round $t \in [T]$, the estimate 
$\hat{\btheta}_t^\all$ in \eqref{eq:global_estimate} satisfies
\begin{align*}
\| \hat{\btheta}_t^\all-\btheta^*\|_{\bSigma_t^\all}
\leq R\sqrt{d\log\big((1+TL^2/\lambda)/\delta\big)}+\sqrt{\lambda}S.
\end{align*}
\end{lemma}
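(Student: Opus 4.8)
The plan is to recognize that, although the data as seen by the agents and by the server suffers from the ill-defined-filtration issue discussed above, the auxiliary quantities $\bSigma^\all_t,\bbb^\all_t,\ub^\all_t$ are indexed by the \emph{true chronological order} $t=1,\dots,T$, so $\hat\btheta^\all_t$ is an ordinary ridge-regression estimate and Lemma~\ref{lemma:global-concentration} is exactly Theorem~2 of \citet{abbasi2011improved}. First I would record the standard bias--variance decomposition: since $r_i=\langle\xb_i,\btheta^*\rangle+\eta_i$, we have $\bbb^\all_t=\sum_{i=1}^t r_i\xb_i=(\bSigma^\all_t-\lambda\Ib)\btheta^*+\ub^\all_t$, hence
\begin{align*}
\hat\btheta^\all_t-\btheta^*=(\bSigma^\all_t)^{-1}\ub^\all_t-\lambda(\bSigma^\all_t)^{-1}\btheta^*.
\end{align*}
Taking the $\|\cdot\|_{\bSigma^\all_t}$-norm and applying the triangle inequality, it suffices to control $\|\ub^\all_t\|_{(\bSigma^\all_t)^{-1}}$ and $\lambda\|(\bSigma^\all_t)^{-1}\btheta^*\|_{\bSigma^\all_t}$ separately.

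The regularization term is elementary: since $\bSigma^\all_t\succeq\lambda\Ib$, we get $\lambda\|(\bSigma^\all_t)^{-1}\btheta^*\|_{\bSigma^\all_t}=\lambda\sqrt{(\btheta^*)^\top(\bSigma^\all_t)^{-1}\btheta^*}\le\sqrt{\lambda}\,\|\btheta^*\|_2\le\sqrt{\lambda}S$ by Assumption~\ref{assump: sub gaussian and norm}. For the noise term I would take the filtration $\cF_t=\sigma(m_{1:t+1},\xb_{1:t+1},\eta_{1:t})$; Assumption~\ref{assump: sub gaussian and norm} says precisely that $\xb_t$ is $\cF_{t-1}$-measurable and that $\eta_t$ is conditionally $R$-sub-Gaussian given $\cF_{t-1}$, so $\{\eta_t\xb_t\}$ is a martingale-difference sequence with the required tail. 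Invoking the self-normalized bound (Theorem~1 of \citet{abbasi2011improved}), with probability at least $1-\delta$, simultaneously for all $t\in[T]$,
\begin{align*}
\|\ub^\all_t\|_{(\bSigma^\all_t)^{-1}}^2\le 2R^2\log\big(\det(\bSigma^\all_t)^{1/2}\lambda^{-d/2}/\delta\big).
\end{align*}
Bounding $\det(\bSigma^\all_t)$ by AM--GM exactly as in the communication-complexity proof above, $\det(\bSigma^\all_t)\le(\lambda+TL^2/d)^d\le\lambda^d(1+TL^2/\lambda)^d$, so the right-hand side is at most $R\sqrt{d\log((1+TL^2/\lambda)/\delta)}$ after the usual absorption of constants; combining with the regularization bound proves the claim.

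There is no genuine obstacle in this lemma — the only point requiring attention is verifying the measurability/sub-Gaussianity conditions so that the cited confidence ellipsoid applies to the $\all$-indexed process, which is immediate from Assumption~\ref{assump: sub gaussian and norm}. The real difficulty created by asynchrony is deferred to the subsequent lemmas, which must transfer this ``virtual'' global confidence bound to the data that the agents and server actually hold, using the determinant-based criterion of Line~\ref{algmain:criterion} to compare local and global covariance matrices.
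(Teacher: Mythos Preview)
Your proposal is correct and is precisely the approach the paper takes: the paper states Lemma~\ref{lemma:global-concentration} as a direct instance of Theorem~2 in \citet{abbasi2011improved} (restated as Theorem~\ref{thm: confidence ellipsoid}) without supplying a separate proof, and your argument simply unpacks that theorem via the standard bias--variance decomposition, the self-normalized martingale bound, and the AM--GM determinant estimate. Your identification of the filtration and the observation that the $\all$-indexed process avoids the asynchrony issue are exactly the point.
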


\paragraph{Per-agent information.}
Next, for each agent $m \in [M]$, we denote the rounds when agent $m$ 
communicate with the server (i.e., upload and download data) as 
$\{t_{m,1}, t_{m,2}, ...\}$. 
For simplicity, \emph{at the end of round $t$}, we denote by $N_m(t)$ the last 
round when agent $m$ communicated with the server (so if agent $m$
communicated with the server in round $t$, then $N_m(t) = t$). 
With this notation, for each round $t$ and agent $m \in [M]$, the data that has 
been uploaded by agent $m$ is then\footnote{Strictly speaking, the uploaded data 
only consists of $\bSigma_{m,t}^\up$ and $\bbb_{m,t}^\up$, and here we introduce 
$\ub_{m,t}^\up$ and $\ub_{m,t}^\loc$ solely for the purpose of analysis.}
\begin{align*}
    \bSigma_{m,t}^\up=\sum_{j=1,m_j=m}^{N_m(t)}
    \xb_j\xb^{\top}_j, \qquad \ub_{m,t}^\up = \sum_{j=1, m_j=m}^{N_m(t)} 
    \xb_j \eta_j.
\end{align*}
Correspondingly, the local data that has not been uploaded to the server is
\begin{align*}
    \bSigma_{m,t}^\loc = \sum_{j=N_m(t)+1, m_j=m}^t \xb_j \xb^{\top}_j, \qquad 
    \ub_{m,t}^\loc = \sum_{j=N_m(t)+1, m_j=m}^t \xb_j \eta_j.
\end{align*}

Again, applying the self-normalized martingale concentration \citep{abbasi2011improved} 
together with a union bound, we can get the per-agent local concentration.

\begin{lemma}[Local concentration]\label{lemma:local-concentration}
Under the setting of Theorem~\ref{thm:regret}, with probability at least $1-\delta$, 
for each round $t\in[T]$ and each agent $m\in [M]$, it holds that
\begin{align*}
    \Big\|\left(\alpha\lambda \Ib + \bSigma_{m,t+1}^\loc\right)^{-1} 
    \ub_{m,t}^\loc\Big\|_{\alpha\lambda \Ib + \bSigma_{m,t}^\loc}
    \leq R\sqrt{d\log \Big(\big(1+TL^2/(\alpha\lambda)\big)/\delta\Big)}+\sqrt{\lambda}S.
\end{align*}
\end{lemma}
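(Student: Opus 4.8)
The plan is to apply the self-normalized martingale concentration inequality of \citet{abbasi2011improved} to each agent separately, treating agent $m$'s own local data stream as a single-agent problem with regularizer $\alpha\lambda\Ib$, and then to take a union bound over $m\in[M]$ and over the (at most polynomially many) relevant rounds. The key point is that, although the global filtration is ill-defined as explained in Section~\ref{sec:proof}, the data belonging to a single agent $m$ \emph{is} well-ordered: if we list the rounds in which agent $m$ was active as $s_{m,1}<s_{m,2}<\cdots$, then the pairs $(\xb_{s_{m,k}},\eta_{s_{m,k}})$ form a legitimate adapted sequence, because by Assumption~\ref{assump: sub gaussian and norm} each $\eta_t$ is $R$-sub-Gaussian conditioned on $\xb_{1:t},m_{1:t},r_{1:t-1}$, and in particular conditioned on the sub-$\sigma$-algebra generated by agent $m$'s own past. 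So for a fixed agent $m$ we may define the per-agent filtration $\cG_{m,k}=\sigma\{\xb_{s_{m,1}},\ldots,\xb_{s_{m,k}},\eta_{s_{m,1}},\ldots,\eta_{s_{m,k-1}}\}$ and the noise is a sub-Gaussian martingale-difference sequence with respect to it.

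First I would fix $m$ and a target round $t$, and let $N_m(t)$ be as defined in the excerpt, so that $\bSigma_{m,t}^\loc=\sum_{j=N_m(t)+1,\,m_j=m}^{t}\xb_j\xb_j^\top$ is exactly the Gram matrix of the local (not-yet-uploaded) actions of agent $m$, and $\ub_{m,t}^\loc=\sum_{j=N_m(t)+1,\,m_j=m}^{t}\xb_j\eta_j$ the corresponding noise vector. Applying Theorem~1 of \citet{abbasi2011improved} (the self-normalized bound) to the stream of agent $m$ with regularization matrix $\alpha\lambda\Ib$ gives, with probability at least $1-\delta'$,
\begin{align*}
\big\|\ub_{m,t}^\loc\big\|_{(\alpha\lambda\Ib+\bSigma_{m,t}^\loc)^{-1}}
\leq R\sqrt{2\log\!\Big(\tfrac{\det(\alpha\lambda\Ib+\bSigma_{m,t}^\loc)^{1/2}\det(\alpha\lambda\Ib)^{-1/2}}{\delta'}\Big)}.
\end{align*}
The determinant ratio is controlled by $\|\xb_j\|_2\le L$ and the fact that at most $T$ actions ever enter, giving $\det(\alpha\lambda\Ib+\bSigma_{m,t}^\loc)\le (\alpha\lambda)^d(1+TL^2/(\alpha\lambda d))^d$, so the logarithm is at most $d\log(1+TL^2/(\alpha\lambda))$ up to constants; absorbing the $\sqrt\lambda S$ bias term (which arises if one states the bound for the regularized least-squares estimate rather than for the raw noise sum) yields the right-hand side in the lemma. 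Since $\bSigma_{m,t}^\loc$ only changes at rounds where agent $m$ is active, there are at most $T$ distinct values of the relevant quantities per agent, and a union bound over $m\in[M]$ and these rounds (replacing $\delta'$ by $\delta/(MT)$, which is still only a logarithmic cost hidden in the $\tilde O$) gives the uniform-in-$t$, uniform-in-$m$ statement. The asymmetry between the norm on the left ($\alpha\lambda\Ib+\bSigma_{m,t}^\loc$ for the weighting) and the matrix defining the estimate ($\alpha\lambda\Ib+\bSigma_{m,t+1}^\loc$) is exactly the standard "one-step-ahead" form of the self-normalized bound, where the estimator uses the Gram matrix including the current round's action; this requires no extra work beyond invoking the inequality in its sharp form.

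The main obstacle, and the conceptual heart of the lemma, is \emph{justifying the adaptedness} — i.e., confirming that restricting attention to a single agent does not break the sub-Gaussian conditioning. This is subtle because the identity of the active agent $m_t$, and hence which rounds belong to agent $m$'s substream, is itself determined by the history; one must check that conditioning on "agent $m$ is active at its $k$-th participation and chose $\xb$" is still a valid conditioning event under which $\eta$ is $R$-sub-Gaussian. This follows from Assumption~\ref{assump: sub gaussian and norm} because $m_{1:t}$ is included in the conditioning $\sigma$-algebra there, so the sub-Gaussianity survives any measurable selection of a subsequence of rounds; once this is granted, the rest is a mechanical application of the self-normalized inequality plus a union bound. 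I would therefore devote the bulk of the written proof to setting up the per-agent filtration carefully and stating precisely why $\eta_{s_{m,k}}$ is a sub-Gaussian martingale difference with respect to $\cG_{m,k}$, and then cite \citet{abbasi2011improved} for the concentration itself.
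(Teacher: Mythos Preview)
Your approach is essentially the same as the paper's—apply the self-normalized inequality per agent with regularizer $\alpha\lambda\Ib$, then union-bound—but there is a genuine gap in the step where you invoke the concentration. The quantity $\ub_{m,t}^\loc = \sum_{j=N_m(t)+1,\,m_j=m}^{t}\xb_j\eta_j$ is a partial sum whose \emph{starting index} $N_m(t)$ is itself data-dependent: it is the last time the determinant criterion fired, which is a function of past actions and hence of past noise. Applying the Abbasi--Yadkori bound to ``the stream of agent $m$'' controls $\sum_{j\le t,\,m_j=m}\xb_j\eta_j$ in the norm of the Gram matrix started from the beginning, uniformly over the endpoint; it does \emph{not} directly control a partial sum that begins at a random time, measured in a Gram matrix that is also reset at that random time. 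Your union bound ``over $m\in[M]$ and these rounds'' only ranges over endpoints, which the self-normalized inequality already handles for free; it does nothing about the random start.

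The paper closes this gap by union-bounding over all \emph{deterministic} pairs $1\le t_1\le t_2\le T$: for each fixed $t_1$ it applies Theorem~2 of \citet{abbasi2011improved} to the shifted substream $\{(\xb_i,\eta_i):i>t_1,\ m_i=m\}$ (valid because the conditioning in Assumption~\ref{assump: sub gaussian and norm} is preserved under a deterministic time shift), takes a union over $(m,t_1,t_2)$ at cost $\delta/(T^2M)$, and only then substitutes the random $t_1=N_m(t)$, $t_2=t$. The extra $T$ in the union bound is harmless logarithmically. Once you add the union over starting times $t_1$, the remainder of your plan—carefully constructing the per-agent filtration and checking that the sub-Gaussian conditioning survives restriction to a single agent's substream—is correct and in fact more detailed than what the paper writes.
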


Moreover, based on our determinant-based communication criterion, we have the 
following lemma that describes the quantitative relationship among the local 
data, uploaded data and global data.
\begin{lemma}[Covariance comparison]\label{lemma:covariance_comparison}
Under the setting of Theorem~\ref{thm:regret}, it holds that 
\begin{align}\label{new1}
    \lambda \Ib + \sum_{m'=1}^M \bSigma_{m',t}^\up \succeq 
    \frac{1}{\alpha}\bSigma_{m,t}^\loc
\end{align}
for each agent $m \in [M]$.
Moreover, for any $1\leq t_1<t_2\leq T$, if agent $m$ is the only active agent from round $t_1$ to $t_2-1$ and agent $m$ only communicates with the server at round $t_1$, then for all $t_1 + 1 \leq t \leq t_2$, it further holds that
\begin{align}\label{new2}
    \bSigma_{m,t} \succeq \frac{1}{1+M\alpha} \bSigma_t^\all.
\end{align}
\end{lemma}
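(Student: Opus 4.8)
My plan is to prove the two claims separately, with the first being essentially a bookkeeping consequence of the communication criterion and the second requiring a more careful analysis that chains the criterion across the epoch structure.

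For inequality~\eqref{new1}, the key observation is that the determinant-based criterion in Line~\ref{algmain:criterion} guarantees that $\bSigma_{m,t}^\loc$ can never have grown too large relative to the matrix $\bSigma_{m,t}$ that agent $m$ currently holds, since otherwise the criterion would have fired and the local buffer would have been flushed. More precisely, between two consecutive communications of agent $m$, the algorithm maintains the invariant $\det(\bSigma_{m,t} + \bSigma_{m,t}^\loc) \le (1+\alpha)\det(\bSigma_{m,t})$ (the criterion tests the "$>$" version, so its failure gives the "$\le$" version, and right after a communication the local buffer is reset to zero so the inequality holds trivially). I would then invoke a standard determinant-trace lemma: if $\det(A + B) \le (1+\alpha)\det(A)$ for positive definite $A$ and positive semidefinite $B$, then $B \preceq \alpha A$ (this follows from $\det(A+B) = \det(A)\det(I + A^{-1/2}BA^{-1/2})$ and $\det(I+N) \ge 1 + \tr(N) \ge 1 + \lambda_{\max}(N)$ for PSD $N$). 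This yields $\bSigma_{m,t}^\loc \preceq \alpha \bSigma_{m,t}$. It then remains to note that $\bSigma_{m,t} = \lambda\Ib + \bSigma^\ser$ at the last download, and $\bSigma^\ser_t = \lambda\Ib + \sum_{m'=1}^M \bSigma_{m',t}^\up$ at any time (the server accumulates exactly the uploaded pieces); since the server's data only grows, $\bSigma_{m,t} \preceq \lambda\Ib + \sum_{m'=1}^M \bSigma_{m',t}^\up$, and combining gives $\bSigma_{m,t}^\loc \preceq \alpha(\lambda\Ib + \sum_{m'} \bSigma_{m',t}^\up)$, which rearranges to~\eqref{new1}.

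For inequality~\eqref{new2}, under the hypothesis that only agent $m$ is active on $[t_1, t_2-1]$ and it communicates only at $t_1$, the total data $\bSigma_t^\all$ decomposes cleanly: $\bSigma_t^\all = \bSigma_{t_1}^\all + \sum_{j=t_1+1,\,m_j=m}^{t} \xb_j\xb_j^\top$, where the summed term is exactly $\bSigma_{m,t}^\loc$ (no communication happened in between, so nothing got flushed). Meanwhile $\bSigma_{m,t} = \bSigma_{m,t_1+1}$ is frozen at its post-download value over this whole window. I would bound $\bSigma_{t_1}^\all$ in terms of $\bSigma_{m,t}$: the "all" matrix at $t_1$ is $\lambda\Ib$ plus all decisions ever made, which is $\lambda\Ib + \sum_{m'} \bSigma_{m',t_1}^\up + (\text{unuploaded local pieces of all agents at } t_1)$. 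After agent $m$'s upload-and-download at $t_1$, we have $\bSigma_{m,t} = \lambda\Ib + \sum_{m'}\bSigma_{m',t_1}^\up$; the only missing pieces are the local buffers of the other agents at time $t_1$, each of which is bounded by $\alpha$ times the corresponding $\bSigma_{m',t_1}$ by the same determinant argument, hence by $\alpha\bSigma_{m,t}$ (monotonicity again, since $\bSigma_{m,t}$ dominates each $\bSigma_{m',t_1}$ after the download—this needs a small argument, or alternatively one bounds $\sum_{m'\ne m}\bSigma_{m',t_1}^\loc$ directly via~\eqref{new1}). This gives $\bSigma_{t_1}^\all \preceq (1+M\alpha)\bSigma_{m,t} \cdot(\text{roughly})$, and combined with $\bSigma_{m,t}^\loc \preceq \alpha\bSigma_{m,t}$ from the criterion, $\bSigma_t^\all \preceq \bSigma_{t_1}^\all + \bSigma_{m,t}^\loc \preceq (1+M\alpha)\bSigma_{m,t}$, which is exactly~\eqref{new2}.

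The main obstacle I anticipate is getting the constant in~\eqref{new2} exactly right—specifically, correctly accounting for \emph{all} the pieces of data present in $\bSigma_t^\all$ but absent from $\bSigma_{m,t}$: these are (a) agent $m$'s own freshly accumulated local data since $t_1$, contributing a factor $\alpha$, and (b) the stale, never-uploaded local buffers of the other $M-1$ agents at time $t_1$, each contributing at most a factor $\alpha$ by the criterion. Summing these carefully and applying the matrix monotonicity correctly (one must ensure each $\bSigma_{m',t_1}^\loc$ is compared against a matrix dominated by $\bSigma_{m,t}$, using that $\bSigma_{m,t}$ contains the full server state at $t_1$) is where the bookkeeping is delicate; a naive count could produce $(1+(M+1)\alpha)$ or worse, so one has to be precise about which buffers overlap with what.
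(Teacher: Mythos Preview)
Your proposal is correct and follows essentially the same route as the paper. For \eqref{new1} both arguments reduce the determinant criterion to the PSD bound $\bSigma_{m,t}^\loc \preceq \alpha\,\bSigma_{m,t}$ (you via $\det(I+N)\ge 1+\lambda_{\max}(N)$, the paper via Lemma~\ref{lemma:det}) and then compare $\bSigma_{m,t}$ with the server state; for \eqref{new2} the paper obtains the constant $1+M\alpha$ in one stroke by averaging \eqref{new1} over all $M$ agents (giving $\sum_{m'}\bSigma_{m',t}^\loc \preceq M\alpha\,\bSigma_{m,t}$ directly), whereas you reach the same constant by separately bounding the $M-1$ stale buffers and agent $m$'s fresh buffer---once you invoke \eqref{new1} at time $t_1$ for each $m'\ne m$ as you suggest, the bookkeeping you worry about resolves cleanly to exactly $(M-1)\alpha+\alpha=M\alpha$.
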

Combining the above results, we obtain the local confidence bound which then 
leads to the per-round regret in each round, as summarized in the following lemma.
\begin{lemma}[Local confidence bound and per-round regret]\label{lemma:local_confidence_bound}
Under the setting of Theorem~\ref{thm:regret}, with probability at least $1 - \delta$, for each $t \in [T]$, the estimate $\hat{\btheta}_{m,t+1}$ satisfies that $\|\btheta^* - \hat{\btheta}_{m,t+1}\|_{\bSigma_{m,t+1}} \leq \beta$ for all $m\in [M]$.
Consequently, for each round $t\in[T]$, the regret in round $t$ satisfies
\begin{align*}
\Delta_t= \max_{\xb \in \cD_t}\la \btheta^*, \xb\ra - \la \btheta^*, \xb_t\ra
\leq 2\beta\sqrt{\xb_t^{\top}\bSigma_{m_t,t}^{-1}\xb_t}.
\end{align*}
\end{lemma}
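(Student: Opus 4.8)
The plan is to first establish the confidence bound $\|\btheta^*-\hat\btheta_{m,t+1}\|_{\bSigma_{m,t+1}}\le\beta$ for all $m\in[M]$ and $t\in[T]$, and then deduce the per-round regret by the standard optimistic argument. For the confidence bound, fix $m$ and $t$ and let $s=N_m(t)$ (with the convention $s=0$ if agent $m$ has not yet communicated). If $s=0$ then $\hat\btheta_{m,t+1}=0$ and $\bSigma_{m,t+1}=\lambda\Ib$, so $\|\btheta^*\|_{\lambda\Ib}=\sqrt\lambda\|\btheta^*\|_2\le\sqrt\lambda S\le\beta$ and we are done; otherwise $\bSigma_{m,t+1}=\bSigma_s^\ser=\lambda\Ib+\sum_{m'=1}^M\bSigma_{m',s}^\up$ and $\bbb_{m,t+1}=\bbb_s^\ser$. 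Substituting $r_j=\la\xb_j,\btheta^*\ra+\eta_j$ gives $\bbb_s^\ser=(\bSigma_s^\ser-\lambda\Ib)\btheta^*+\ub_s^\ser$ with $\ub_s^\ser:=\sum_{m'}\ub_{m',s}^\up$, so $\hat\btheta_{m,t+1}-\btheta^*=(\bSigma_s^\ser)^{-1}(\ub_s^\ser-\lambda\btheta^*)$ and hence $\|\btheta^*-\hat\btheta_{m,t+1}\|_{\bSigma_{m,t+1}}=\|\ub_s^\ser-\lambda\btheta^*\|_{(\bSigma_s^\ser)^{-1}}$. It therefore suffices to bound $\|\ub_s^\ser-\lambda\btheta^*\|_{(\bSigma_s^\ser)^{-1}}\le\beta$ for every communication round $s$.

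The crux is to compare the server data with the total and the un-uploaded local data. Every interaction has either been uploaded or still resides in some agent's buffer, so $\ub_s^\ser=\ub_s^\all-\sum_{m'}\ub_{m',s}^\loc$ and $\bSigma_s^\ser=\bSigma_s^\all-\sum_{m'}\bSigma_{m',s}^\loc$, whence
\[
\|\ub_s^\ser-\lambda\btheta^*\|_{(\bSigma_s^\ser)^{-1}}\le\|\ub_s^\all-\lambda\btheta^*\|_{(\bSigma_s^\ser)^{-1}}+\sum_{m'=1}^M\|\ub_{m',s}^\loc\|_{(\bSigma_s^\ser)^{-1}}.
\]
For the first term, \eqref{new1} gives $\sum_{m'}\bSigma_{m',s}^\loc\preceq M\alpha\,\bSigma_s^\ser$, so $\bSigma_s^\all\preceq(1+M\alpha)\bSigma_s^\ser$, and using Lemma~\ref{lemma:global-concentration} in the form $\|\ub_s^\all-\lambda\btheta^*\|_{(\bSigma_s^\all)^{-1}}=\|\hat\btheta_s^\all-\btheta^*\|_{\bSigma_s^\all}$, the first term is at most $\sqrt{1+M\alpha}\,\bigl(R\sqrt{d\log((1+TL^2/\lambda)/\delta)}+\sqrt\lambda S\bigr)$. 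For the second term, the $m'=m$ summand vanishes since agent $m$ just uploaded and cleared its buffer at round $s$; for $m'\ne m$, agent $m'$ is inactive in round $s$, so its buffer agrees with that at round $s-1$, and Lemma~\ref{lemma:local-concentration} applied at round $s-1$ (together with the identity $\|C^{-1}u\|_C=\|u\|_{C^{-1}}$) gives $\|\ub_{m',s}^\loc\|_{(\alpha\lambda\Ib+\bSigma_{m',s}^\loc)^{-1}}\le R\sqrt{d\log((1+TL^2/(\alpha\lambda))/\delta)}+\sqrt\lambda S$; combining $\bSigma_s^\ser\succeq\lambda\Ib$ with $\bSigma_s^\ser\succeq\tfrac1\alpha\bSigma_{m',s}^\loc$ (again \eqref{new1}) yields $\alpha\lambda\Ib+\bSigma_{m',s}^\loc\preceq2\alpha\,\bSigma_s^\ser$, so each such summand is at most $\sqrt{2\alpha}\,\bigl(R\sqrt{d\log((1+TL^2/(\alpha\lambda))/\delta)}+\sqrt\lambda S\bigr)$. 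Summing over the $M$ agents and bounding both logarithms by $\log((1+TL^2/(\min(\alpha,1)\lambda))/\delta)$ reproduces precisely $\|\btheta^*-\hat\btheta_{m,t+1}\|_{\bSigma_{m,t+1}}\le\beta$.

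For the per-round regret, the confidence bound applied to $\hat\btheta_{m_t,t}=\hat\btheta_{m_t,(t-1)+1}$ (or the trivial bound at $t=1$) gives $\|\btheta^*-\hat\btheta_{m_t,t}\|_{\bSigma_{m_t,t}}\le\beta$. Since $\xb_t=\argmax_{\xb\in D_t}\la\hat\btheta_{m_t,t},\xb\ra+\beta\|\xb\|_{\bSigma_{m_t,t}^{-1}}$ and $\xb_t^*\in D_t$, Cauchy--Schwarz in the $\bSigma_{m_t,t}$-norm yields
\begin{align*}
\la\btheta^*,\xb_t^*\ra
&\le\la\hat\btheta_{m_t,t},\xb_t^*\ra+\beta\|\xb_t^*\|_{\bSigma_{m_t,t}^{-1}}\\
&\le\la\hat\btheta_{m_t,t},\xb_t\ra+\beta\|\xb_t\|_{\bSigma_{m_t,t}^{-1}}\\
&\le\la\btheta^*,\xb_t\ra+2\beta\|\xb_t\|_{\bSigma_{m_t,t}^{-1}},
\end{align*}
which is exactly $\Delta_t\le2\beta\sqrt{\xb_t^\top\bSigma_{m_t,t}^{-1}\xb_t}$.

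The main obstacle is the comparison between the server data and the total/local data: because the server stores an irregular, data-dependent subset of all interactions, a global self-normalized martingale inequality cannot be applied to $\ub_s^\ser$ directly, as the underlying filtration is ill-defined (cf.\ Figure~\ref{fig:filtration}). The remedy is to route everything through the well-defined per-agent filtrations via Lemma~\ref{lemma:local-concentration} and to pay for the discrepancy among the local, uploaded, and total covariance matrices using the determinant-based criterion, which is exactly what Lemma~\ref{lemma:covariance_comparison} quantifies; arranging the regularizers ($\lambda$ versus $\alpha\lambda$) and the residual $\lambda\btheta^*$ terms so that the constant comes out to be exactly $\beta$ is the delicate bookkeeping.
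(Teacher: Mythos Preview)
Your proof is correct and follows essentially the same approach as the paper: decompose $\ub_s^\ser=\ub_s^\all-\sum_{m'}\ub_{m',s}^\loc$, bound the global piece via Lemma~\ref{lemma:global-concentration} and the covariance comparison $\bSigma_s^\all\preceq(1+M\alpha)\bSigma_s^\ser$, and bound each local piece via Lemma~\ref{lemma:local-concentration} together with $\alpha\lambda\Ib+\bSigma_{m',s}^\loc\preceq 2\alpha\,\bSigma_s^\ser$. The only cosmetic differences are that the paper separates out the $\lambda\btheta^*$ term first (yielding the extra standalone $\sqrt\lambda S$ in $\beta$) and cites \eqref{new2} rather than deriving the comparison inline from \eqref{new1}; your bookkeeping is in fact slightly tighter but still $\le\beta$.
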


Now, we are ready to prove the regret bound in Theorem~\ref{thm:regret}.
\begin{proof}[Proof of Theorem~\ref{thm:regret}: regret]
Firstly, 
according to Lemma \ref{lemma:local_confidence_bound}, the regret in the
first $T$ round can be decomposed and upper bounded by
\begin{align}\label{eq:regret}
    \regret(T) &= \sum_{t=1}^T \big( \max_{\xb \in \cD_t} \langle \btheta^*, \xb\rangle - \langle \btheta^*, \xb_t\rangle\big)
    \leq \sum_{t=1}^T 2\beta \|\xb_t\|_{\bSigma_{m_t,t}^{-1}}.
\end{align}
Now, we only need to control the summation of bonus term 
$2\beta \|\xb_{t}\|_{\bSigma_{m_t,t}^{-1}}$, and we focus on the agent-action 
sequence $\{(m_t,\xb_t)\}_{t=1}^T$.
Note that if agent $m$ communicates with the server at round $t_1$ and $t_2$, 
then the order of actions between round $t_1$ and $t_2$ will not effect the 
covariance matrix of agent $m$. 
In addition, since agent $m$ does not upload new data between round $t_1$ and $t_2$, 
the order of actions from agent $m$ will not affect other agents' covariance matrix. 
Thus, without affecting the covariance matrix and the corresponding bonus, we can always reorder the sequence of active agents such that 
each agent communicates with the server and stays active until the next agent 
kicks in to communicate with the server.
Such reordering is valid according to the communication protocol as each agent 
has only local updates between communications with the server.

More specifically, we assume that the sequence of rounds that the active agent 
communicates with server is $0 = t_0 < t_1 < t_2 < \cdots < t_N = T+1$\footnote{
There is no communication happening at $t_0$ or $t_N$, but we include them for 
notational convenience.}, and from round $t_i+1$ to $t_{i+1}-1$ there is only 
one agent active, that is, $m_{t_i} = m_{t_i+1} = \cdots = m_{t_{i+1} - 1}$.
Therefore, the regret upper bound in \eqref{eq:regret} can be refined as
\begin{align}\label{new}
    \regret(T) 
    \leq\sum_{i=0}^{N-1}\sum_{t=t_i+1}^{t_{i+1}-1} 2 \beta \|\xb_t\|_{\bSigma_{m_t,t}^{-1}}+\sum_{i=1}^{N-1}\min\bigg\{\max_{\xb \in \cD_{t_i}} \langle\btheta^*, \xb\rangle - \langle\btheta^*, \xb_{t_i}\rangle,2 \beta \|\xb_{t_i}\|_{\bSigma_{m_{t_i},t_i}^{-1}}\bigg\}.
\end{align}
Applying \eqref{new2} in Lemma~\ref{lemma:covariance_comparison}, the bonus term for each 
round $t_i < t < t_{i+1}$ can be controlled by
\begin{align*}
   2\beta \|\xb_{t}\|_{\bSigma_{m_t,t}^{-1}}\leq 2 \beta \sqrt{1+M\alpha}
   \|\xb_{t}\|_{(\bSigma_t^\all)^{-1}}.
\end{align*}
Collecting these terms, we obtain
\begin{align}\label{eq:0-000}
    \sum_{i=0}^{N-1}\sum_{t=t_i+1}^{t_{i+1}-1} 2 \beta \|\xb_t\|_{\bSigma_{m_t,t}^{-1}} \leq \sum_{i=0}^{N-1} \sum_{t=t_i+1}^{t_{i+1}-1} 2\beta\sqrt{1+M\alpha} \|\xb_t\|_{(\bSigma_t^\all)^{-1}}.
\end{align}
It remains to control the bonus terms for rounds $\{t_i\}_{i=1}^N$. For a more refined analysis of the rounds $\{t_i\}_{i=1}^N$, we define
\begin{align*}
    T_i = \min\big\{t\in[T] \mid \det(\bSigma_t^\all) \geq 2^i\lambda^d\big\},
\end{align*} 
and let $N'$ be the largest integer such that $T_{N'}$ is not empty.
For each time interval from $T_i$ to $T_{i+1}$ and each agent $m \in [M]$, suppose agent $m$ communicates with 
the server more than once, where the communications occur at rounds $T_{i,1}, T_{i,2}, \ldots, T_{i,k} \in [T_i, T_{i+1})$. 
Then for each $j = 2, \ldots, k$, since agent $m$ is active at rounds $T_{i, j-1}$ and $T_{i, j}$, applying \eqref{new2} in Lemma~\ref{lemma:covariance_comparison}, the bonus term for round $T_{i,j}$ can be controlled by
\begin{align*}
2\beta \|\xb_{T_{i,j}}\|_{\bSigma_{m,T_{i,j}}^{-1}}&\leq 2\beta \|\xb_{T_{i,j}}\|_{\bSigma_{m,T_{i,j-1}+1}^{-1}} \leq 2\beta\sqrt{1+M\alpha}  \|\xb_{T_{i,j}}\|_{(\bSigma_{T_{i,j-1}+1}^{\all})^{-1}}.
\end{align*}
Since $\det(\bSigma_{T_{i+1}-1}^\all) / \det(\bSigma_{T_{i,j-1}+1}^\all) \leq 2^{i+1} \lambda^d / (2^i \lambda^d) = 2$ by the definition of $T_i$, it further follows from Lemma~\ref{lemma:det} that
\begin{align}
2\beta \|\xb_{T_{i,j}}\|_{\bSigma_{m,T_{i,j}}^{-1}} \leq 2\beta\sqrt{2(1+M\alpha)} \|\xb_{T_{i,j}}\|_{(\bSigma_{T_{i+1}-1}^{\all})^{-1}}
\leq 2\beta\sqrt{2(1+M\alpha)} \|\xb_{T_{i,j}}\|_{(\bSigma_{T_{i,j}}^{\all})^{-1}},\label{eq:0-001}
\end{align}
where the second inequality is due to 
the fact that $\bSigma_{T_{i+1}-1}^{\all} \succeq \bSigma_{T_{i,j}}^{\all}$.
Moreover, for round $T_{i,1}$ where the first communication occurs, we can trivially bound the per-round regret $\big(\max_{\xb \in \cD_{t_i}} \langle\btheta^*, \xb\rangle - \langle\btheta^*, \xb_{t_i}\rangle\big)$  in that round by $2SL$, instead of using the bonus.
Then combining this with \eqref{eq:0-001} for all time intervals and all agents, we obtain
\begin{align*}
    \sum_{i=1}^{N-1}\min\bigg\{\max_{\xb \in \cD_{t_i}} \langle\btheta^*, \xb\rangle - \langle\btheta^*, \xb_{t_i}\rangle,2 \beta \|\xb_{t_i}\|_{\bSigma_{m_{t_i},t_i}^{-1}}\bigg\}
    \leq 2SLMN' + \sum_{i=1}^{N-1} 2\beta\sqrt{2(1+M\alpha)} \|\xb_{t_i}\|_{(\bSigma_{t_i}^\all)^{-1}}.
\end{align*}
To bound $N'$, note that the norm of each action $\xb$ satisfies $\|\xb_t\|_2\leq L$ by Assumption~\ref{assump: sub gaussian and norm}, and thus
\begin{align*}
    \det(\bSigma_T^{\all})\leq (\lambda+TL^2)^d,
\end{align*}
which implies that $N'$ is at most 
$d\log(1+TL^2/\lambda)$. 
Therefore, we further have 
\begin{align}\label{eq:0-002}
    \sum_{i=1}^{N-1}\min\bigg\{\max_{\xb \in \cD_{t_i}} \langle\btheta^*, \xb\rangle - \langle\btheta^*, \xb_{t_i}\rangle,2 \beta \|\xb_{t_i}\|_{\bSigma_{m_{t_i},t_i}^{-1}}\bigg\} &\leq 2dSLM\log(1+TL^2/\lambda)\notag\\
    &\qquad+ \sum_{i=1}^{N-1} 2\beta\sqrt{2(1+M\alpha)} \|\xb_{t_i}\|_{(\bSigma_{t_i}^\all)^{-1}}.
\end{align}
Finally, substituting \eqref{eq:0-000} and \eqref{eq:0-002} into \eqref{new}, we obtain
\begin{align*}
    \regret(T)&\leq 2dSLM\log(1+TL^2/\lambda)+\sum_{t=1}^T 2\sqrt{2(1+M\alpha)}\beta \|\xb_t\|_{(\bSigma_t^{\all})^{-1}}\notag\\
    &\leq 2dSLM\log(1+TL^2/\lambda)+ 2\sqrt{2(1+M\alpha)}\beta\sqrt{2dT\log(1+TL^2/\lambda)},
\end{align*}
where the last inequality follows from a standard elliptical potential 
argument~\citep{abbasi2011improved}.
\end{proof}

\section{Conclusion and Future Work}
In this work, we study federated contextual linear bandit problem with fully 
asynchronous communication. 
We propose a simple and provably efficient algorithm named \texttt{FedLinUCB}. 
We prove that \texttt{FedLinUCB} obtains a near-optimal regret of order 
$\tilde O(d\sqrt{T})$ with $\tilde O(dM^2)$ communication complexity. 
We also prove a lower bound on the communication complexity, which suggests 
that an $\Omega(M)$ communication complexity is necessary for achieving a 
near-optimal regret.
There still exists an $O(dM)$ gap between the upper and lower bounds for the 
communication complexity and we leave it as a future work to close this gap.
Another important direction for future work is to study federated linear bandits 
with a decentralized communication network without a central server (i.e., P2P networks).

\appendix

\section{Further Discussions}\label{sec: further discussion}
Here we provide further discussions on our results. 
We present a detailed comparison with \citet{li2021asynchronous} in 
Appendix~\ref{sec: comparison}, where we first discuss the difference in the 
algorithmic design, and then elaborate on 
the concentration issue under the asynchronous setting with a simple example. 
In Appendix~\ref{sec:alternative form}, we give an alternative form of our
algorithm, where we rewrite Algorithm~\ref{alg: main} in an `episodic' fashion. 
The purpose is to make it easier for readers to compare our algorithm with
existing algorithms for federated linear bandits that are usually expressed in 
the `episodic' form.

\subsection{Comparison with \citet{li2021asynchronous}}\label{sec: comparison}

\paragraph{Difference in algorithmic design.}
The Async-LinUCB algorithm proposed by \citet{li2021asynchronous} is not fully 
asynchronous since in their algorithm, if some agent uploads data to the 
server, the server will decide if each of the $M$ agents needs to download the data. 
If the server decides that an agent needs to download the data, this agent has 
to first download the data from the server and then update its local policy 
before further interaction with the environment (i.e., taking the next action). 
In other words, if an agent is offline when the server requests a download, the 
agent cannot take any further action until it goes online and completes the 
required download and local model update. 
In contrast, under the communication protocol in our Algorithm~\ref{alg: main}, 
any offline agent can still take action until the trigger of the upload protocol.
It is evident that their asynchronous communication protocol is very restricted. 

\paragraph{Concentration issue.}
Next, we discuss the concentration issue, and we first illustrate the problem 
using a multi-arm bandit instance.
Unlike the synchronous case, the reward estimator based on the server-end data 
can be biased in asynchronous federated linear bandits. 
To see so, let us consider the following simple example: The decision set 
contains two arms, $A$ and $B$, and suppose for pulling arm $A$, the agent 
receives a reward equal to either $1$ or $-1$ with equal probability.
We assume that there are $M$ agents, and each agent is active for two consecutive rounds. 
For each agent $m \in [M]$, if the agent has selected the arm $A$ in the first 
round, then the agent will select again the arm $A$ in the second round only if 
the agent receives a reward of $1$ when pulling arm $A$ in the first round. 
In this case, it is easy to show that with probability $0.5$, an agent selects 
arm $A$ one time with reward $-1$, and with probability $0.25$, an agent selects 
arm $A$ twice with total reward $2$. 
Similarly, with probability $0.25$, an agent selects arm $A$ twice with a total 
reward of $0$. 

In the synchronous setting, all agent will upload their local data to the server
at the end of each round. 
Thus, taking an average for all data at the server, the expected reward of arm 
$A$ is $0$, which equals the actual expected reward of arm $A$.
However, in the asynchronous setting, things become more complicated. 
Suppose that for each agent, only selecting arm $A$ twice will trigger the upload 
protocal. 
Then after two active rounds, an agent will upload its data to the server if and 
only if the agent receives reward 1 in the first round. 
Thus among the agents that upload the data, half of them receive a total reward of $2$ 
and the other half receive a total reward of $0$.
In this case, taking an average for all data at the server, the expected reward 
of arm $A$ is $0.5$, which is a biased estimator compared with the actual 
expected reward. 

Indeed, the above issue could happen in federated linear bandits with the 
Async-LinUCB algorithm~\citep{li2021asynchronous}.
Specifically, let us consider a linear bandit instance with dimension $d=2$, 
and we assume that arm $A$ has context vector $\xb_{A}=(3,0)^\top$, arm $B$ has 
context vector $\xb_{B}=(0,1/\sqrt{10})^\top$, the true model is $\btheta^* = \zero$, the noise $\eta$ is a Rademacher random variable, and the parameter $\lambda$ is set to be 1. Therefore, the rewards for both arm $A$ and $B$ equal to $1$ or $-1$ with $0.5$ probability.
In this case, based on the principle of optimism in the face of uncertainty, 
at the beginning, the optimistic estimators for the two arms $A,B$ are $3\beta$ 
and $\beta/\sqrt{10}$ respectively. 
Thus, all agents will always choose arm $A$ in the first round, 
so $\xb_1 = \xb_A$. 
After choosing arm $A$ at the first round, the optimistic estimator for the two 
arms $A,B$ in each agent's second round will be $9r/10 + 3\beta/\sqrt{10}$ and 
$\beta/\sqrt{10}$ respectively, where $r$ is the reward received in the first round. 
Therefore, with confidence radius $\beta< 1$, each agent will select the arm $A$ (i.e., $\xb_2 = \xb_A$) in the 
second round only if the agent receives a reward of $r=1$ in the first round. 
Finally, only choosing arm $A$ twice will increase the determinant of the 
covariance matrix enough to trigger the upload protocol 
(e.g., $\det(\lambda \Ib+\xb_1\xb_1^{\top}+\xb_2\xb_2^{\top})/\det(\lambda \Ib)\ge 19$). 

As demonstrated above, in the asynchronous setting, the reward estimator based on 
the server-end data can be biased, which leads to the issue that previous 
concentration results (e.g., \citet{abbasi2011improved}) cannot be directly used 
for the server's data.
This is why we need a more dedicated analysis to control this biased error 
(see Lemma~\ref{lemma:local_confidence_bound} for more details).

\subsection{An Alternative Form of Algorithm \ref{alg: main}}\label{sec:alternative form}
\begin{algorithm}[t]
	\caption{Federated linear UCB (Alternative)}
	\label{alg:alternative}
	\begin{algorithmic}[1]
	\STATE Initialize $\bSigma_{m, 1} = \bSigma^\ser_1 = \lambda \Ib$, $\hat\btheta_{m,1}=0$, $\bbb_{m,0}^\loc =0$ and 
    $\bSigma^\loc_{m,0}=0$ for all $m \in [M]$
	\FOR{$k = 1, 2, \dots, K$}\alglinelabel{algline: episode}
	\STATE Participation set $P_k \subseteq [M]$ of arbitrary order\alglinelabel{algline: participation set}
	\FOR{each active agent $m \in P_k$}\alglinelabel{algline: for agents in set}
	\STATE Receive $D_{m, k}$ from the environment\alglinelabel{algline: receive decision set}
	\STATE Select $\xb_{m, k} \leftarrow \argmax_{\xb\in D_{m, k}} \langle 
    \hat\btheta_{m,k}, \xb_k\rangle + \beta\|\xb\|_{\bSigma_{m,k}^{-1}}$ 
    \alglinelabel{algalt:optimistic_decision} {\color{blue}\hfill \texttt{/* Optimistic decision */}}
	\STATE Receive $r_{m,k}$ from environment
	\STATE $\bSigma^\loc_{m, k}\leftarrow \bSigma^\loc_{m, k-1} 
    + \xb_{m, k} \xb_{m, k}^\top$, \quad $\bbb^\loc_{m, k} \leftarrow 
    \bbb^\loc_{m, k-1} + r_{m, k} \xb_{m, k}$ \hfill{\color{blue}\texttt{/*Local update*/}}
    \alglinelabel{algalt:local_update}
	\IF{$\det(\bSigma_{m, k} + \bSigma^\loc_{m, k}) > (1 + \alpha)
    \det(\bSigma_{m,k})$}\alglinelabel{alg1:criterion}
	\STATE Agent $m$ sends $\bSigma^\loc_{m,k}$ and $\bbb^\loc_{m,k}$ to server
    \hfill{\color{blue}\texttt{/* Upload */}} \alglinelabel{algalt:upload}
	\STATE $\bSigma^\ser_k \leftarrow \bSigma^\ser_k + \bSigma^\loc_{m,k}, \quad \bbb^\ser_k \leftarrow \bbb^\ser_k  + \bbb^\loc_{m,k}$ 
    \hfill{\color{blue}\texttt{/* Global update */}} \alglinelabel{algalt:server_update}
    \STATE $\bSigma^\loc_{m, k} \leftarrow 0, \quad \bbb^\loc_{m, k}\leftarrow 0$
    \alglinelabel{algalt:reset_local}
    \STATE Server sends $\bSigma^\ser_k$ and $\bbb^\ser_k$ back to agent $m$ 
    \hfill{\color{blue}\texttt{/* Download */}} \alglinelabel{algalt:download1}
	\STATE $\bSigma_{m, k+1} \leftarrow \bSigma^\ser_k, \quad \bbb_{m, k} \leftarrow 
    \bbb^\ser_k$ \alglinelabel{algalt:download2}
	\STATE $\hat\btheta_{m, k+1} \leftarrow \bSigma_{m, k+1}^{-1} \bbb_{m, k+1}$ 
    \hfill{\color{blue}\texttt{/* Compute estimate */}} \alglinelabel{algalt:estimate}
    \ELSE 
    \STATE $\bSigma_{m,k+1} \leftarrow \bSigma_{m,k}$, \quad $\bbb_{m,k+1} \leftarrow \bbb_{m,k}$, \quad $\hat\btheta_{m,k+1} \leftarrow \hat\btheta_{m,k}$
    \ENDIF\alglinelabel{algline: endif for alternative}
	\ENDFOR
	\FOR{other inactive agents $m \in [M] \setminus P_k$}
    \STATE $\bSigma_{m,k+1} \leftarrow \bSigma_{m,k}, \quad \bbb_{m,k+1} \leftarrow \bbb_{m,k}, \quad \hat\btheta_{m,k+1} \leftarrow \hat\btheta_{m,k}$
	\ENDFOR
	\ENDFOR
	\end{algorithmic}
\end{algorithm}

We introduce an alternative form of Algorithm \ref{alg: main}, which is displayed 
in Algorithm \ref{alg:alternative}. 
Algorithm~\ref{alg:alternative} can be viewed as the episodic\footnote{Here `episode' 
means a collection of every agent's interaction with the environment for one round, 
which is different from the usual term in online learning that refers to a 
sequential interaction lasting for a certain time horizon.
We only use this term to differentiate Algorithm~\ref{alg:alternative} from 
Algorithm~\ref{alg: main}.} version of Algorithm~\ref{alg: main}, and its form 
aligns with those of the existing algorithms for federated linear bandits 
\citep{wang2019distributed,dubey2020differentially,huang2021federated,li2021asynchronous}.

Specifically, in Algorithm~\ref{alg:alternative}, for each round (episode) $k \in [K]$, 
the set of active agents is given by $P_k$, where the order of agents in $P_k$ 
can be arbitrary (Line~\ref{algline: participation set}). 
Then the agents in the set $P_k$ participate according to the prefixed order 
(Line~\ref{algline: for agents in set}). 
The operations in the inner loop of Algorithm~\ref{alg:alternative} 
(i.e., decision rule, upload/download, local/global update, and model estimates) 
are all identical to those in Algorithm~\ref{alg: main}. 
Therefore, Algorithm~\ref{alg:alternative} is indeed equivalent to 
Algorithm~\ref{alg: main} up to relabeling of the participation of the agents, and 
hence all the theoretical results for Algorithm \ref{alg: main} also hold for Algorithm~\ref{alg:alternative}.

\section{Missing Proofs in Section~\ref{sec:proof}}
Here we present the proof of the results in Section~\ref{sec:proof}.

\subsection{Communication complexity within each epoch}
We first present the proof for the bound on the communication complexity within 
each epoch given in Lemma~\ref{lem:communication_cost}.
\begin{proof}[Proof of Lemma \ref{lem:communication_cost}]
For each agent $m \in [M]$, let $n_m$ be the number of communications agent $m$ 
has made during this epoch, and we denote the communication rounds as 
$t_1,\ldots,t_{n_m}$ for simplicity. 
Now we consider the data uploaded to the server, and it can be denoted by the 
value of covariance matrix $\bSigma_{m,t_j}^{\loc}$ before communicating with the server. 
For each $j=2, \ldots, n_m$, according to the determinant-based 
criterion (Line \ref{alg1:criterion}) in Algorithm \ref{algorithm:1}, we have
\begin{align*}
\det(\bSigma_{m,t_j}+\bSigma_{m,t_j}^{\loc})-\det(\bSigma_{m,t_j})
> \alpha \cdot \det(\bSigma_{m,t_j}),
\end{align*}
which further implies that
\begin{align}
    \alpha \cdot \det(\bSigma_{T_i}^{\ser})
    < \det(\bSigma_{T_i}^{\ser}+\bSigma_{m,t_j}^{\loc})-\det(\bSigma_{T_i}^{\ser}),\label{eq:004}
\end{align}
where the inequality holds due to Lemma \ref{lem:det2} together with the fact 
that the communication in round $t_1$ updates the covariance matrix so that 
$\bSigma_{m,t_j}\succeq\bSigma_{T_i}^{\ser}$.
In addition, we define the sequence of all communications from $T_i$ to $T_{i+1}-1$ 
as $t'_1,\ldots,t'_L$.  
For each round $t'_j$, if the agent $m_{t'_j}$ have already communicated with 
the server earlier in this epoch, we have
\begin{align}
    \det(\bSigma_{t'_j}^{\ser})-\det(\bSigma_{t'_{j-1}}^{\ser})&=\det(\bSigma_{t'_{j-1}}^{\ser}+\bSigma_{m_{t'_j},t'_j}^{\loc})-\det(\bSigma_{t'_{j-1}}^{\ser})\notag\\
    &\ge \det(\bSigma_{T_i}^{\ser}+\bSigma_{m,t_j}^{\loc})-\det(\bSigma_{T_i}^{\ser}) \notag\\
    &> \alpha \cdot \det(\bSigma_{T_i}^{\ser}),\label{eq:005}
\end{align}
where the first inequality holds due to Lemma~\ref{lem:det1} together with the 
fact that $\bSigma_{t'_{j-1}}^{\ser} \succeq \bSigma_{T_i}^{\ser}$, and the 
second inequality follows from \eqref{eq:004}.
Now, taking the sum of \eqref{eq:005} over all round $t'_j$, we obtain
\begin{align}
\det(\bSigma_{T_{i+1}-1}^\ser)-\det(\bSigma_{T_{i}}^\ser)
=\sum_{1\leq j\leq L} \det(\bSigma_{t'_j}^{\ser})-\det(\bSigma_{t'_{j-1}}^{\ser})
\ge \sum_{m=1}^M(n_m-1)\alpha \cdot \det(\bSigma_{T_i}^\ser).\notag
\end{align}
Since $\det(\bSigma_{\ser,T_{i+1}-1})\leq 2 \det(\bSigma_{\ser,T_i})$, we further have
\begin{align*}
    \sum_{j\in M} n_j\leq M+1/\alpha. 
\end{align*}
Each communication between one agent and the server includes one upload and one download, so the communication complexity within one epoch is bounded by $2(M + 1 / \alpha)$.
This finishes the proof.
\end{proof}

\subsection{Proof for the covariance comparison}
Next, we prove the comparison between the covariance matrices given in Lemma~\ref{lemma:covariance_comparison}.

\begin{proof}[Proof of Lemma \ref{lemma:covariance_comparison}]
Fix any round $t \in [T]$.
Let $t_1 \leq t$ be the last round such that agent $m$ is active at round $t_1$.
If agent $m$ communicated with the server at this round, then we have
\begin{align*}
    \lambda \Ib + \sum_{m'=1}^M \bSigma_{m',t}^\up \succeq  \mathbf{0}
    = \frac{1}{\alpha}\bSigma_{m,t}^\loc.
\end{align*}
Otherwise, according to determinant-based criterion (Line \ref{alg1:criterion})
in Algorithm \ref{algorithm:1}, at the end of each round $t_1$,
we have
\begin{align*}
    \det\left(\bSigma_{m,t_1} + \bSigma_{m,t_1}^\loc\right) \leq 
    \left(1 + \alpha\right) \det(\bSigma_{m,t_1}).
\end{align*}
By Lemma \ref{lemma:det}, for any non-zero vector $\xb \in \RR^d$, we have
\begin{align*}
    \frac{\xb^{\top} (\bSigma_{m,t_1} + \bSigma_{m,t_1}^\loc) \xb}{\xb^{\top} 
    \bSigma_{m,t_1} \xb} \leq \frac{\det(\bSigma_{m,t_1} + \bSigma_{m,t_1}^\loc)}{\det 
    (\bSigma_{m,t_1})} \leq 1 + \alpha.
\end{align*}
Rearranging the above yields $\xb^{\top} \bSigma_{m,t_1}^\loc \xb \leq \alpha\xb^{\top}
\bSigma_{m,t_1} \xb$, which then implies that 
\begin{align*}
    \bSigma_{m,t_1} \succeq  \frac{1}{\alpha} \bSigma_{m,t_1}^\loc
\end{align*}
Note that $\bSigma_{m,t_1}$ is the downloaded covariance matrix from last 
communication before round $t_1$, so it must satisfy $\bSigma_{m,t_1} \preceq \bSigma_{t_1}^\ser$. 
Therefore, we have
\begin{align*}
    \lambda \Ib + \sum_{m'=1}^M \bSigma_{m',t_1}^\up=  \bSigma_{t_1}^\ser \succeq \bSigma_{m,t_1} 
    \succeq  \frac{1}{\alpha} \bSigma_{m,t_1}^\loc.
\end{align*} 
Now, for round $t$, since agent $m$ is inactive from round $t_1$ to $t$, then we have
\begin{align*}
    \lambda \Ib + \sum_{m'=1}^M \bSigma_{m',t}^\up
    \succeq  \lambda \Ib + \sum_{m'=1}^M \bSigma_{m',t_1}^\up
    \succeq \frac{1}{\alpha} \bSigma_{m,t_1}^\loc
    = \frac{1}{\alpha} \bSigma_{m,t}^\loc,
\end{align*}
which yields the first claim in Lemma~\ref{lemma:covariance_comparison}.

Next, suppose agent $m$ is the only active agent from round $t_1$ to $t_2-1$ and agent $m$ only communicates with the server at round $t_1$.
Further average the above inequality over all agents $m \in [M]$, and we get
\begin{align}
\lambda \Ib + \sum_{m'=1}^M \bSigma_{m',t}^\up \succeq
\frac{1}{M\alpha}\sum_{m'=1}^M\bSigma_{m',t}^\loc,\label{eq:0-2}
\end{align}
which implies that for $t_1+1\leq t \leq t_2-1$, we have
\begin{align*}
\bSigma_{m,t} &= \lambda \Ib + \sum_{m'=1}^M \bSigma_{m',t_1}^\up 
= \lambda \Ib + \sum_{m'=1}^M \bSigma_{m',t}^\up \notag\\
&\succeq \frac{1}{1+M\alpha} \bigg(\lambda \Ib + \sum_{m'=1}^M \bSigma_{m',t}^\up 
+ \sum_{m'=1}^M\bSigma_{m',t}^\loc\bigg)
= \frac{1}{1+M\alpha} \bSigma_t^\all,
\end{align*}
where the second equation holds due to the fact that no agent communicate with 
server from round $t_1+1$ to $t_2-1$, and the inequality follows from \eqref{eq:0-2}. 
This yields the second claim in Lemma~\ref{lemma:covariance_comparison} and finishes the proof.
\end{proof}

\subsection{Proof of the local concentration for agents}
Recall that the global concentration and corresponding global confidence bound 
have been shown in Lemma~\ref{lemma:global-concentration}.
Next, we establish the concentration properties of the local data on the agents' side.

\begin{proof}[Proof of Lemma~\ref{lemma:local-concentration}]
For each agent $m\in[M]$ and any rounds $1\leq t_1\leq t_2 \leq T$, consider
\begin{align*}
    \bSigma_{m,t_1,t_2} =  \alpha \lambda \Ib + \sum_{i=t_1+1, m_i=m}^{t_2} 
    \xb_i \xb_i^\top \quad \textnormal{and} \quad \ub_{m,t_1,t_2}
    = \sum_{i=t_1+1, m_i=m}^{t_2} \xb_i \eta_i.
\end{align*}
By Theorem 2 in \citet{abbasi2011improved}, with probability at least 
$1-\delta/(T^2 M)$, we have
\begin{align*}
    \|\bSigma_{m,t_1,t_2}^{-1} \ub_{m,t_1,t_2}\|_{\bSigma_{m,t_1,t_2}} \leq 
    R\sqrt{d\log \Big(\big(1+TL^2/(\alpha\lambda)\big)/\delta\Big)}+\sqrt{\lambda}S.
\end{align*}
Then taking an union bound over all agent $m\in[M]$ and rounds $1\leq t_1\leq 
t_2 \leq T$ and applying to $t_1=N_m(t)$ and $t_2=t$ for each $t \in [T]$, we 
obtain the desired concentration.
This finishes the proof.
\end{proof}

For clarity, we break Lemma~\ref{lemma:local_confidence_bound} into two lemmas, Lemma~\ref{lemma:concentration} for local confidence bound and Lemma~\ref{lemma:one-step-regret} for per-round regret, and prove them separately.

\begin{lemma}[Local confidence bound]\label{lemma:concentration}
Under the setting of Theorem~\ref{thm:regret}, with probability at least $1 - \delta$, for each $t \in [T]$, the estimate $\hat{\btheta}_{m,t+1}$ satisfies that $\|\btheta^* - \hat{\btheta}_{m,t+1}\|_{\bSigma_{m,t+1}} \leq \beta$.
\end{lemma}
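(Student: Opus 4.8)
The plan is to decompose the estimation error $\hat\btheta_{m,t+1}-\btheta^*$ according to the data that actually sits in agent $m$'s covariance matrix $\bSigma_{m,t+1}$, namely the global data $\bSigma^\ser$ that agent $m$ last downloaded plus (if the current round did not trigger a communication) nothing, but crucially noting that the data contributing to $\bSigma_{m,t+1}$ is a union of uploaded blocks $\bSigma^\up_{m',\cdot}$ coming from all agents. Concretely, write $\bSigma_{m,t+1}=\lambda\Ib+\sum_{m'=1}^M \widetilde\bSigma_{m'}$ and $\bbb_{m,t+1}=\sum_{m'=1}^M(\widetilde\bSigma_{m'}\btheta^*+\widetilde\ub_{m'})$, where $\widetilde\bSigma_{m'}$ and $\widetilde\ub_{m'}$ are the portions of agent $m'$'s data that have been incorporated into agent $m$'s matrix by the end of round $t$. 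Then
\begin{align*}
\hat\btheta_{m,t+1}-\btheta^* = \bSigma_{m,t+1}^{-1}\Big(\sum_{m'=1}^M\widetilde\ub_{m'} - \lambda\btheta^*\Big),
\end{align*}
so that $\|\hat\btheta_{m,t+1}-\btheta^*\|_{\bSigma_{m,t+1}} \le \sqrt\lambda S + \big\|\sum_{m'}\widetilde\ub_{m'}\big\|_{\bSigma_{m,t+1}^{-1}}$. The first term is handled by $\|\btheta^*\|_2\le S$; the second, the noise term, is the heart of the matter.

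The key difficulty is exactly the ill-defined-filtration issue described in Section~\ref{sec:proof}: the set of rounds whose $\xb_j\eta_j$ are bundled into each $\widetilde\ub_{m'}$ is data-dependent, so I cannot directly apply the self-normalized martingale bound (Lemma~\ref{lemma:global-concentration}) to $\sum_{m'}\widetilde\ub_{m'}$. The idea is to split each $\widetilde\ub_{m'}$ into its already-uploaded part $\ub^\up_{m',t}$ and to treat the contributions round-by-round via the $N_m(\cdot)$ decomposition, then to bound the noise using two ingredients already established: Lemma~\ref{lemma:global-concentration} applied to $\ub^\all_t$ (the genuine, order-respecting aggregate), and Lemma~\ref{lemma:local-concentration} applied to each agent's not-yet-uploaded local block $\ub^\loc_{m',t}$ (each of which \emph{is} a well-defined martingale in that agent's own local clock). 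The bridge between the data that agent $m$ actually holds and these two controllable quantities is Lemma~\ref{lemma:covariance_comparison}: inequality~\eqref{new1} gives $\lambda\Ib+\sum_{m'}\bSigma^\up_{m',t}\succeq \tfrac1\alpha\bSigma^\loc_{m',t}$ for every $m'$, which lets me absorb the $M$ local noise blocks into the uploaded covariance at the cost of factors $\sqrt{M\alpha}$, and thereby pass from norms measured in $\alpha\lambda\Ib+\bSigma^\loc_{m',\cdot}$ to norms measured in $\bSigma_{m,t+1}$.

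Carrying this out, I expect the bound to take the shape $\|\hat\btheta_{m,t+1}-\btheta^*\|_{\bSigma_{m,t+1}} \le \sqrt\lambda S + \big(\sqrt{1+M\alpha}\big)\big(R\sqrt{d\log((1+TL^2/(\min(\alpha,1)\lambda))/\delta)}+\sqrt\lambda S\big) + M\sqrt{2\alpha}\big(R\sqrt{d\log(\cdots)}+\sqrt\lambda S\big)$, where the $\sqrt{1+M\alpha}$ factor comes from comparing $\bSigma_{m,t+1}$ with the all-data matrix $\bSigma^\all$ for the genuine-aggregate noise term (using that $\bSigma_{m,t+1}\succeq\bSigma^\ser$ and that the uploaded data is a sub-sum of $\bSigma^\all$), and the $M\sqrt{2\alpha}$ factor is the price, summed over the $M$ agents, of re-inserting each agent's unbundled local block through~\eqref{new1}; matching these against the stated $\beta$ shows the two terms combine into $(\sqrt{1+M\alpha}+M\sqrt{2\alpha})(R\sqrt{d\log(\cdots)}+\sqrt\lambda S)$. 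A union bound over the two high-probability events (Lemma~\ref{lemma:global-concentration} and Lemma~\ref{lemma:local-concentration}), each with failure probability $\delta$, then gives the claim with probability at least $1-2\delta$ (rescaling $\delta$ absorbs the constant). The main obstacle, to be handled carefully, is the bookkeeping that ensures every $\xb_j\eta_j$ is counted exactly once across the ``uploaded'' and ``still-local'' parts and that the covariance dominations from Lemma~\ref{lemma:covariance_comparison} are applied with the matrices frozen at the correct rounds (per the end-of-round convention), since a sloppy accounting here is precisely what would reintroduce the filtration problem.
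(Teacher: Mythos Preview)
Your plan is essentially the paper's own proof: restrict to rounds where agent $m$ has just communicated (so $\bSigma_{m,t+1}=\lambda\Ib+\sum_{m'}\bSigma^\up_{m',t}$), write $\sum_{m'}\ub^\up_{m',t}=\ub^\all_t-\sum_{m'}\ub^\loc_{m',t}$, bound the $\ub^\all_t$ piece with Lemma~\ref{lemma:global-concentration} and the $M$ local pieces with Lemma~\ref{lemma:local-concentration}, using Lemma~\ref{lemma:covariance_comparison} to transfer all norms into $\bSigma_{m,t+1}$.

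One point to correct: your parenthetical justification for the $\sqrt{1+M\alpha}$ factor---``$\bSigma_{m,t+1}\succeq\bSigma^\ser$ and the uploaded data is a sub-sum of $\bSigma^\all$''---gives the inequality in the wrong direction (it yields $\bSigma_{m,t+1}\preceq\bSigma^\all_t$). What you actually need is $\bSigma_{m,t+1}\succeq\tfrac{1}{1+M\alpha}\bSigma^\all_t$, which is precisely \eqref{new2} in Lemma~\ref{lemma:covariance_comparison}; equivalently, it comes from averaging \eqref{new1} over all $m'$ to obtain $\lambda\Ib+\sum_{m'}\bSigma^\up_{m',t}\succeq\tfrac{1}{M\alpha}\sum_{m'}\bSigma^\loc_{m',t}$ and then rearranging. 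With that fix the argument goes through exactly as you outline and matches the paper's $\beta$.
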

\begin{proof}[Proof of Lemma~\ref{lemma:concentration}]
Since the estimated vector $\hat{\btheta}_{m,t+1}$ and covariance matrix 
$\bSigma_{m,t+1}$ will keep the same value as in the previous round if the agent 
$m$ do not communicate with the server, we only need to consider for those round 
$t$ where agent $m$ communicates with the server.
By the determinant-based criterion (Line~\ref{alg1:criterion}) in Algorithm \ref{alg: main}, if the agent $m$ 
communicates with the server in round $t$, then at the end of this round, the 
covariance matrix $\bSigma_{m,t+1}$ and vector $\bbb_{m,t+1}$ are given by
\begin{align}\label{eq:01}
    \bSigma_{m,t+1} = \lambda\Ib + \sum_{m'=1}^M  \bSigma_{m', N_{m'}(t)}^\up
    = \lambda \Ib + \sum_{m'=1}^M  \bSigma_{m',t}^\up, \qquad 
    \bbb_{m,t+1}= \sum_{m'=1}^M  \bbb_{m',t}^\up.
\end{align}
Therefore, the estimated vector $\hat{\btheta}_{m,t+1}$ is
\begin{align*}
    \hat{\btheta}_{m,t+1} &= \bigg(\lambda \Ib + \sum_{m'=1}^M \bSigma_{m',t}^\up
    \bigg)^{-1} \sum_{m'=1}^M \bbb_{m',t}^\up \notag\\
    &= \bigg(\lambda \Ib + \sum_{m'=1}^M \bSigma_{m',t}^\up\bigg)^{-1} 
    \sum_{m'=1}^M (\bSigma_{m',t}^\up \btheta^* + \ub_{m',t}^\up)\\
    &= \btheta^* - \lambda \bigg(\lambda \Ib + \sum_{m'=1}^M \bSigma_{m',t}^\up
    \bigg)^{-1} \btheta^* + \bigg(\lambda \Ib + \sum_{m'=1}^M 
    \bSigma_{m',k}^\up\bigg)^{-1} \sum_{m'=1}^M \ub_{m',t}^\up\\
    &= \btheta^* - \lambda (\bSigma_{m,t+1})^{-1} \btheta^* 
    + \sum_{m'=1}^M (\bSigma_{m,t+1})^{-1} \ub_{m',t}^\up.
\end{align*}
Thus, the difference between $\hat{\btheta}_{m,t+1}$ and the underlying truth 
$\btheta^*$ can be decomposed as
\begin{align}\label{eq:theta_diff}
    \big\|\btheta^* - \hat{\btheta}_{m,t+1}\big\|_{\bSigma_{m,t+1}} &\leq 
    \big\|\lambda (\bSigma_{m,t+1})^{-1} \btheta^*\big\|_{\bSigma_{m,t+1}} 
    + \bigg\|\sum_{m'=1}^M (\bSigma_{m,t+1})^{-1} \ub_{m',t}^\up\bigg\|_{\bSigma_{m,t+1}}\notag\\
    &\leq \sqrt{\lambda} \|\btheta^*\|_2 + \bigg\|\sum_{m'=1}^M (\bSigma_{m,t+1})^{-1}
    \ub_{m',t}^\up\bigg\|_{\bSigma_{m,t+1}},
\end{align}
where the first inequality holds due to that fact that 
$\|\ab+\bbb\|_{\bSigma}\leq \|\ab\|_{\bSigma}+\|\bbb\|_{\bSigma}$ and the second
inequality follows from $\bSigma_{m,t+1}\ge \lambda \Ib$. 
By the assumption that $\|\btheta^*\|_2\leq S$, the first term can be controlled 
by $\sqrt{\lambda} S$.
For the second term in \eqref{eq:theta_diff}, consider the following decomposition:
\begin{align}\label{eq:02}
    \sum_{m'=1}^M (\bSigma_{m,t+1})^{-1} \ub_{m',t}^\up &= \sum_{m'=1}^M 
    (\bSigma_{m,t+1})^{-1} \big(\ub_{m',t}^\up + \ub_{m',t}^\loc\big) 
    - \sum_{m'=1}^M (\bSigma_{m,t+1})^{-1} \ub_{m',t}^\loc \notag\\
    &= \underbrace{(\bSigma_{m,t+1})^{-1} \ub_t^\all}_{\cA} - \sum_{m'=1}^M 
    \underbrace{(\bSigma_{m,t+1})^{-1} \ub_{m',t}^\loc}_{B_{m'}}.
\end{align}
For the term $\cA$, it follows from \eqref{new2} in Lemma~\ref{lemma:covariance_comparison} that
\begin{align}\label{eq:04}
    \big\|(\bSigma_{m,t+1})^{-1} \ub_t^\all\big\|_{\bSigma_{m,t+1}} &= 
    \big\|(\bSigma_{m,t+1})^{-1/2} \ub_t^\all\big\|_2 \notag\\
    &\leq \sqrt{1+M\alpha} \cdot \big\|(\bSigma_t^\all)^{-1/2} \ub_t^\all\big\|_{2}\notag\\
    &\leq \sqrt{1+M\alpha} \cdot 
    \Big(R\sqrt{d\log \big((1+TL^2/\lambda)/\delta\big)}+\sqrt{\lambda}S\Big),
\end{align}
where the second inequality holds due to Lemma~\ref{lemma:global-concentration}.
Next, for each term $B_{m'}$ in \eqref{eq:02}, by \eqref{new1} in
Lemma~\ref{lemma:covariance_comparison}, we have
\begin{align*}
    \lambda \Ib + \sum_{j=1}^M \bSigma_{j,t}^\up \succeq \frac{1}{\alpha}\bSigma_{m',t}^\loc,
\end{align*}
which further implies that
\begin{align}
    \lambda \Ib + \sum_{j=1}^M \bSigma_{j,t}^\up \succeq \frac{1}{2\alpha} (
    \alpha \lambda \Ib + \bSigma_{m',t}^\loc).\label{eq:05}
\end{align}
Thus, the norm of each term $B_{m'}$ can be bounded as
\begin{align}\label{eq:06}
    \big\|(\bSigma_{m,t+1})^{-1} \ub_{m',t}^\loc\big\|_{\bSigma_{m,t+1}} &= 
    \big\|(\bSigma_{m,t+1})^{-1/2} \ub_{m',t}^\loc\big\|_2 \notag\\
    &\leq \sqrt{2\alpha} \cdot \Big\|\big(\alpha\lambda \Ib + 
    \bSigma_{m',t}^\loc\big)^{-1/2} \ub_{m',t}^\loc\Big\|_2 \notag\\
    &\leq \sqrt{2\alpha} \cdot \bigg(R\sqrt{d\log \frac{\alpha\lambda + TL^2}{\alpha\lambda\delta}}+\sqrt{\lambda}S\bigg),
\end{align}
where the first inequality holds due to~\eqref{eq:05} and the second inequality 
follows from Lemma~\ref{lemma:local-concentration}.

Finally, combining \eqref{eq:theta_diff}, \eqref{eq:02}, \eqref{eq:04}~and~\eqref{eq:06}, 
we obtain
\begin{align*}
\big\|\btheta^* - \hat{\btheta}_{m,t+1}\big\|_{\bSigma_{m,t+1}} 
&\leq \sqrt{\lambda} S + \big(\sqrt{1+M\alpha}+M\sqrt{2\alpha}\big) 
\bigg(R\sqrt{d\log \frac{\min(\alpha,1) \cdot \lambda +TL^2}{\min(\alpha,1)\cdot\lambda\delta}}
+\sqrt{\lambda}S \bigg).
\end{align*}
Thus we finish the proof of Lemma \ref{lemma:concentration}.
\end{proof}

\begin{lemma}[Per-round regret]\label{lemma:one-step-regret}
Under the setting of Theorem~\ref{thm:regret}, with probability at least $1 - \delta$, 
for each $t \in [T]$, the regret in round $t$ satisfies
\begin{align*}
    \Delta_t= \max_{\xb \in \cD_t}\la \btheta^*, \xb\ra - \la \btheta^*, \xb_t\ra\leq 2\beta\sqrt{\xb_t^{\top}\bSigma_{m_t,t}^{-1}\xb_t}.
\end{align*}
\end{lemma}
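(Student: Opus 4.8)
The plan is to run the standard optimism argument, feeding in Lemma~\ref{lemma:concentration} as the confidence bound so that only elementary inequalities remain. Throughout we work on the probability-$(1-\delta)$ event on which Lemma~\ref{lemma:concentration} holds.

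First I would translate Lemma~\ref{lemma:concentration}, which controls $\|\btheta^* - \hat\btheta_{m,t+1}\|_{\bSigma_{m,t+1}}$, into a statement about the estimate that actually drives the decision in round $t$. Since the end-of-round-$(t-1)$ objects coincide with the start-of-round-$t$ objects in Algorithm~\ref{alg: main} (an inactive agent, or an active agent that only does a local update, leaves $\hat\btheta$ and $\bSigma$ unchanged; an active agent that communicates sets them to the downloaded values), applying Lemma~\ref{lemma:concentration} with $t-1$ in place of $t$ gives $\|\btheta^* - \hat\btheta_{m,t}\|_{\bSigma_{m,t}} \le \beta$ for every $m\in[M]$ and every $t\ge 2$. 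For $t=1$ the same bound holds directly from the initialization $\hat\btheta_{m,1}=\mathbf{0}$, $\bSigma_{m,1}=\lambda\Ib$, together with $\|\btheta^*\|_2\le S$ and $\beta\ge\sqrt{\lambda}S$. In particular it holds for the active agent $m=m_t$.

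Next I would invoke optimism. Set $\xb_t^*\in\argmax_{\xb\in\cD_t}\la\btheta^*,\xb\ra$. Two applications of Cauchy--Schwarz in the $\bSigma_{m_t,t}$-norm, combined with the bound from the previous step, give $\la\btheta^*,\xb_t^*\ra \le \la\hat\btheta_{m_t,t},\xb_t^*\ra + \beta\|\xb_t^*\|_{\bSigma_{m_t,t}^{-1}}$ and $\la\hat\btheta_{m_t,t},\xb_t\ra \le \la\btheta^*,\xb_t\ra + \beta\|\xb_t\|_{\bSigma_{m_t,t}^{-1}}$. Since $\xb_t$ is selected in Line~\ref{algmain:optimistic_decision} to maximize $\la\hat\btheta_{m_t,t},\xb\ra + \beta\|\xb\|_{\bSigma_{m_t,t}^{-1}}$ over $\cD_t$, we have $\la\hat\btheta_{m_t,t},\xb_t^*\ra + \beta\|\xb_t^*\|_{\bSigma_{m_t,t}^{-1}} \le \la\hat\btheta_{m_t,t},\xb_t\ra + \beta\|\xb_t\|_{\bSigma_{m_t,t}^{-1}}$. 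Chaining these three inequalities yields $\la\btheta^*,\xb_t^*\ra - \la\btheta^*,\xb_t\ra \le 2\beta\|\xb_t\|_{\bSigma_{m_t,t}^{-1}} = 2\beta\sqrt{\xb_t^\top\bSigma_{m_t,t}^{-1}\xb_t}$, which is exactly $\Delta_t\le 2\beta\sqrt{\xb_t^\top\bSigma_{m_t,t}^{-1}\xb_t}$.

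There is no genuine obstacle at this stage: the heavy lifting was already done in Lemma~\ref{lemma:concentration} (and, upstream of it, Lemma~\ref{lemma:covariance_comparison} and the local/global concentration lemmas). The only point requiring a little care is the index shift in the first step --- verifying that the pair $(\hat\btheta_{m_t,t},\bSigma_{m_t,t})$ governing round $t$ is precisely the end-of-round-$(t-1)$ pair covered by Lemma~\ref{lemma:concentration}, plus the separate check of the base case $t=1$ --- but this follows immediately from the update rules of Algorithm~\ref{alg: main}.
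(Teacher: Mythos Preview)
Your proposal is correct and follows essentially the same approach as the paper: invoke the local confidence bound from Lemma~\ref{lemma:concentration}, then run the standard optimism argument (Cauchy--Schwarz plus the argmax property of $\xb_t$). You are in fact slightly more careful than the paper in making explicit the index shift from $(\hat\btheta_{m,t+1},\bSigma_{m,t+1})$ to $(\hat\btheta_{m_t,t},\bSigma_{m_t,t})$ and in checking the base case $t=1$, whereas the paper simply invokes Lemma~\ref{lemma:concentration} directly at time $t$.
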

\begin{proof}[Proof of Lemma \ref{lemma:one-step-regret}]
First, by Lemma \ref{lemma:concentration}, with probability at least $1-\delta$, for each round $t\in [T]$ and each action $\xb \in \cD_t$, we have 
\begin{align}
   \hat{\btheta}_{m,t}^{\top}\xb+\beta\sqrt{\xb^{\top}\bSigma_{m_t,t}^{-1} \xb}- (\btheta^*)^{\top}\xb&=(\hat{\btheta}_{m,t}-\btheta^*)^{\top}\xb+\beta\sqrt{\xb^{\top}\bSigma_{m_t,t}^{-1} \xb}\notag\\
    &\ge -\|\hat{\btheta}_{m,t}-\btheta^*\|_{\bSigma_{m_t,t}} \cdot \|\xb\|_{\bSigma_{m_t,t}^{-1}}+\beta\sqrt{\xb^{\top}\bSigma_{m_t,t}^{-1} \xb}\notag\\
    &\ge -\beta \|\xb\|_{\bSigma_{m_t,t}^{-1}}+\beta\sqrt{\xb^{\top}\bSigma_{m_t,t}^{-1}\xb}\notag\\
    &=0,\label{eq:2-2}
\end{align}
where the first inequality holds due to the Cauchy-Schwarz inequality and the 
last inequality follows from Lemma \ref{lemma:concentration}. 
\eqref{eq:2-2} shows that the estimator for agent $m_t$ is always optimistic. 
For simplicity, we denote the optimal action at round $t$ as 
$\xb^*=\arg \max_{\xb \in \cD_t}(\btheta^*)^{\top}\xb$, and \eqref{eq:2-2} further implies 
\begin{align}
    \Delta_t&=(\btheta^*)^{\top}\xb^*-(\btheta^*)^{\top}\xb_t\notag\\
    &\leq \hat{\btheta}_{m,t}^{\top}\xb^*+\beta\sqrt{(\xb^*)^{\top}\bSigma_{m_t,t}^{-1} \xb^*}-(\btheta^*)^{\top}\xb_t\notag\\
    & \leq \hat{\btheta}_{m,t}^{\top}\xb_t+\beta\sqrt{\xb_t^{\top}\bSigma_{m_t,t}^{-1} \xb_t}-(\btheta^*)^{\top}\xb_t\notag\\
    &= (\hat{\btheta}_{m,t}-\btheta^*)^{\top}\xb_t+\beta\sqrt{\xb_t^{\top}\bSigma_{m_t,t}^{-1} \xb_t}\notag\\
    &\leq \|\hat{\btheta}_{m,t}-\btheta^*\|_{\bSigma_{m_t,t}} \cdot \|\xb_{k}\|_{\bSigma_{m_t,t}^{-1}}+\beta\sqrt{\xb_t^{\top}\bSigma_{m_t,t}^{-1} \xb_t}\notag\\
    &\leq 2 \beta \sqrt{\xb_t^{\top}\bSigma_{m_t,t}^{-1} \xb_t},\notag
\end{align}
where the first inequality holds due to \eqref{eq:2-2}, the second inequality 
follows from the definition of action $\xb_t$ in Algorithm \ref{algorithm:1}, 
the third inequality applies the Cauchy-Schwarz inequality, and the last 
inequality is by Lemma~\ref{lemma:concentration}. 
Thus, we finish the proof of Lemma \ref{lemma:one-step-regret}.
\end{proof}

Combining Lemmas~\ref{lemma:concentration} and~\ref{lemma:one-step-regret} yields Lemma~\ref{lemma:local_confidence_bound}

\section{Proof for Lower Bound}\label{apdx:lower_bound}
\begin{lemma}[Theorem 3 in \citealt{abbasi2011improved}]\label{lemma:1-up}
There exists a constant $C>0$, such that for any normalized linear bandit instance 
with $R=L=S=1$, the expectation of the regret for OFUL algorithm is upper bounded by
$\EE[\text{Regret}(T)]\leq Cd\sqrt{T}\log T$.
\end{lemma}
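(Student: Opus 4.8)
This lemma is a restatement of Theorem~3 of \citet{abbasi2011improved}, so the most direct route is simply to cite it. For completeness, the plan is to reproduce the standard three-step analysis of \texttt{OFUL} and then convert the resulting high-probability regret bound into a bound in expectation. The three steps are: (i) a self-normalized confidence bound for the ridge estimator, (ii) a per-round regret bound obtained from the optimistic action selection, and (iii) an elliptical-potential summation over the horizon.

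Concretely, with $\lambda=1$ and the ridge estimator $\hat\btheta_t=\bSigma_t^{-1}\bbb_t$ formed from the first $t-1$ rounds (where $\bSigma_t = \Ib + \sum_{i<t}\xb_i\xb_i^\top$), Theorem~1 of \citet{abbasi2011improved} --- the same self-normalized martingale inequality already invoked in Lemmas~\ref{lemma:global-concentration} and~\ref{lemma:local-concentration} --- gives that, with probability at least $1-\delta$, $\|\hat\btheta_t-\btheta^*\|_{\bSigma_t}\le\beta$ for all $t\in[T]$ simultaneously, where $\beta=\sqrt{d\log((1+T)/\delta)}+1$ under $R=L=S=1$. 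On this event, because $\xb_t$ is chosen to maximize the optimistic index $\la\hat\btheta_t,\xb\ra+\beta\|\xb\|_{\bSigma_t^{-1}}$, the same Cauchy--Schwarz computation as in the proof of Lemma~\ref{lemma:one-step-regret} yields $\Delta_t\le 2\beta\|\xb_t\|_{\bSigma_t^{-1}}$ for every round. Summing, using that $\beta$ is nondecreasing, bounding the $O(d\log T)$ rounds with $\|\xb_t\|_{\bSigma_t^{-1}}>1$ trivially by per-round regret $2SL=2$, and applying the elliptical potential lemma $\sum_{t}\min\{1,\|\xb_t\|_{\bSigma_t^{-1}}^2\}\le 2d\log(1+T)$ together with Cauchy--Schwarz over $t$, gives $\regret(T)\le 2\beta\sqrt{2dT\log(1+T)}+O(d\log T)=O(d\sqrt{T}\log T)$ on the good event.

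To obtain the stated bound in expectation rather than with high probability, I would take $\delta=1/T$: on the failure event, which has probability at most $1/T$, the regret is at most $2SLT=2T$ by the boundedness assumptions, so it contributes at most $O(1)$ to $\EE[\regret(T)]$, which is absorbed into the $Cd\sqrt{T}\log T$ bound after adjusting the constant $C$. The only genuinely hard ingredient is the self-normalized concentration inequality of step (i); the remainder is a deterministic potential-function calculation. Since that inequality is treated as a black box throughout this paper, the proof reduces to assembling these pieces, and it can equivalently be discharged by citing Theorem~3 of \citet{abbasi2011improved} directly.
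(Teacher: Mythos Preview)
Your proposal is correct and aligns with the paper: Lemma~\ref{lemma:1-up} is stated without proof in the paper and is simply cited as Theorem~3 of \citet{abbasi2011improved}, which is exactly what you propose as the direct route. Your optional three-step sketch (self-normalized confidence, optimistic per-round bound, elliptical potential, then $\delta=1/T$ to pass to expectation) is the standard derivation and is sound, but the paper itself does not reproduce it.
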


\begin{lemma}[Theorem 24.1 in \citealt{lattimore2020bandit}]\label{lemma:1-low}
There exists a set of hard-to-learn normalized linear bandit instances with 
$R=L=S=1$, such that for any algorithm \textbf{Alg} and $T\ge d$, for a uniformly 
random instance in the set, the regret is lower bounded by
$\EE[\text{Regret}(T)]\ge cd\sqrt{T}$ for some constant $c>0$.
\end{lemma}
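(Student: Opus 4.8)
The plan is to establish the classical minimax lower bound for stochastic linear bandits via an information-theoretic argument over a symmetric family of ``sign'' instances, in the spirit of the proof of Theorem~24.1 in \citet{lattimore2020bandit}. First I would fix a single decision set used in every round, namely the scaled hypercube $\cD_t \equiv \{-1/\sqrt{d},\,+1/\sqrt{d}\}^d$, so that every action satisfies $\|\xb\|_2 = 1$ and hence $L=1$. For each sign vector $\epsilon \in \{-1,+1\}^d$ I would define a parameter $\btheta_\epsilon = \Delta\,\epsilon$ for a gap $\Delta \le 1/\sqrt d$ to be tuned, which guarantees $\|\btheta_\epsilon\|_2 = \Delta\sqrt d \le 1 = S$. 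The reward is $\la \xb_t,\btheta_\epsilon\ra + \eta_t$ with $\eta_t$ standard Gaussian, which is $1$-sub-Gaussian, so $R=1$. These $2^d$ instances form the hard set, and I would lower bound the worst-case regret by the average over the uniform prior on $\epsilon$.

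The second step is a coordinate-wise decomposition. For $\btheta_\epsilon$ the optimal action is $\xb^* = \epsilon/\sqrt d$, and a direct computation shows that the per-round regret of playing $\xb_t$ equals $(2\Delta/\sqrt d)\,\big|\{i : \mathrm{sign}(x_{t,i}) \neq \epsilon_i\}\big|$. Summing over $t$, the expected regret under $\btheta_\epsilon$ is $(2\Delta/\sqrt d)\sum_{i=1}^d \EE_\epsilon[N_i^-]$, where $N_i^-$ denotes the number of rounds in which coordinate $i$ carries the wrong sign. This reduces the task to lower bounding, for each coordinate, how often any algorithm must commit a sign error.

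The heart of the argument is a per-coordinate testing bound. Fixing $i$, I would compare $\btheta_\epsilon$ with $\btheta_{\epsilon^{(i)}}$, the instance obtained by flipping the $i$-th sign. Since the two parameters differ only in coordinate $i$ and $x_{t,i}^2 = 1/d$ deterministically, the divergence decomposition along the filtration generated by the (adaptive, history-dependent) actions and rewards gives the \emph{exact} value $\mathrm{KL}(P_\epsilon, P_{\epsilon^{(i)}}) = \tfrac12\sum_{t=1}^T \la \xb_t,\btheta_\epsilon-\btheta_{\epsilon^{(i)}}\ra^2 = 2\Delta^2 T/d$. Applying the Bretagnolle--Huber inequality to the ``majority-negative'' event $\{\sum_t \mathbf 1[x_{t,i}<0]\ge T/2\}$ then shows the two instances cannot both keep $N_i^-$ small: adding the two error terms yields $\EE_\epsilon[N_i^-] + \EE_{\epsilon^{(i)}}[N_i^-] \ge \tfrac{T}{4}\exp(-2\Delta^2 T/d)$. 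Summing over the $d$ coordinates and averaging over the uniform prior gives $\overline{\regret}(T) \ge c_0\,\Delta\sqrt d\,T\exp(-2\Delta^2 T/d)$ for an absolute constant $c_0$.

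Finally I would optimize the free parameter. Writing $u = 2\Delta^2 T/d$, balancing the prefactor against the exponential forces $u$ to be a constant, i.e.\ $\Delta \asymp \sqrt{d/T}$, which makes $\overline{\regret}(T) \ge c\,d\sqrt T$; since the worst case dominates the average, the claim follows. The main obstacle I anticipate is precisely the information-theoretic step: one must handle the algorithm's adaptivity so that the KL computation is carried out along the interaction filtration rather than for a fixed action sequence, and one must convert the Bretagnolle--Huber \emph{single-event} testing bound into a lower bound on the \emph{cumulative} mistake count $N_i^-$. A secondary piece of bookkeeping is verifying that the balancing choice $\Delta \asymp \sqrt{d/T}$ remains compatible with the norm constraint $\Delta\sqrt d \le 1$, which is what pins down the admissible regime of $T$ in the statement.
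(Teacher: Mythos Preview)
Your proposal is correct and follows exactly the standard proof of Theorem~24.1 in \citet{lattimore2020bandit}; the paper itself does not prove this lemma but simply cites it, so there is no alternative ``paper's proof'' to compare against.

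One bookkeeping point deserves more care than you suggest. With your scaling (actions in $\{-1/\sqrt d,\,+1/\sqrt d\}^d$ and $\btheta_\epsilon=\Delta\epsilon$), the norm constraint $\|\btheta_\epsilon\|_2=\Delta\sqrt d\le 1$ together with the optimizing choice $\Delta\asymp\sqrt{d/T}$ forces $T\gtrsim d^2$, not $T\ge d$ as written in the lemma. In the original statement of Theorem~24.1 the action set is $\{-1,1\}^d$ (so $L=\sqrt d$), which is why the milder condition $T\ge d$ suffices there; once both $L$ and $S$ are normalized to $1$, the admissible regime tightens. This is a harmless imprecision in how the paper restated the cited result under its own normalization, and it does not affect the downstream use of the lemma (only the threshold on $T$ in Theorem~\ref{thm:low} would shift), but you should be aware that your final ``secondary piece of bookkeeping'' will not reproduce the stated condition $T\ge d$ verbatim.
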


Theorem~\ref{thm:low} is an extension of the lower bound result in 
\citet[Theorem 2]{wang2019distributed} from multi-arm bandits to linear bandits.

\begin{proof}[Proof of Theorem~\ref{thm:low}]
For any algorithm \textbf{Alg} for federated bandits, we construct the auxiliary 
\textbf{Alg1} as follows: For each agent $m\in[M]$, it performs \textbf{Alg} 
until there is a communication between the agent $m$ and the server (upload or download data). 
After the communication, the agent $m$ remove all previous information and 
perform the \texttt{OFUL} Algorithm in \citet{abbasi2011improved}. 
In this case, for each agent $m\in[M]$, \textbf{Alg1} do not utilize any 
information from other agents and it will reduce to a single agent bandit algorithm.

Now, we uniformly randomly select a hard-to-learn instance from the set given 
in Lemma \ref{lemma:1-low}, and let each agent $m\in[M]$ be active for $T/M$ 
different rounds (where we assume $T/M$ is an integer for simplicity). 
Since \textbf{Alg1} reduces to a single agent bandit algorithm, 
Lemma \ref{lemma:1-low} implies that the expected regret for agent $m$ with \textbf{Alg1} 
is lower bounded by
\begin{align}
    \EE[\text{Regret}_{m,\textbf{Alg1}}]\ge cd\sqrt{T/M}.\label{eq:0001}
\end{align}
Taking the sum of \eqref{eq:0001} over all agents $m\in[M]$, we obtain
\begin{align}
    \EE[\text{Regret}_{\textbf{Alg1}}]
    &=\sum_{m=1}^M \EE[\text{Regret}_{m,\textbf{Alg1}}]\ge cd\sqrt{MT}.\label{eq:0003}
\end{align}

For each agent $m\in[M]$, let $\delta_m$ denote the probability that agent $m$ 
will communicate with the server. 
Notice that before the communication, \textbf{Alg1} has the same performance as 
\textbf{Alg}, while for the rounds after the communication, \textbf{Alg1} 
executes the \texttt{OFUL} algorithm and Lemma \ref{lemma:1-up} suggests an 
$O(d\sqrt{T/M}\log (T/M))$ upper bounded for the expected regret. 
Therefore, the expected regret for agent $m$ with \textbf{Alg1} is upper bounded by
\begin{align}
    \EE[\text{Regret}_{m,\textbf{Alg1}}]
    \leq \EE[\text{Regret}_{m,\textbf{Alg}}]
    + \delta_m C d\sqrt{T/M}\log (T/M).\label{eq:0002}
\end{align}
Taking the sum of \eqref{eq:0002} over all agents $m\in[M]$, we obtain
\begin{align}
    \EE[\text{Regret}_{\textbf{Alg}}(T)]&=\sum_{m=1}^M\EE[\text{Regret}_{m,\textbf{Alg1}}]\notag\\
    &\leq  \sum_{m=1}^M \EE[\text{Regret}_{m,\textbf{Alg}}]
    + \bigg(\sum_{m=1}^M \delta_m\bigg)  C d\sqrt{T/M}\log (T/M)\notag\\
    &= \EE[\text{Regret}_{\textbf{Alg}}] + \delta Cd\sqrt{T/M}\log(T/M),\label{eq:0004}
\end{align}
where $\delta=\sum_{m=1}^M \delta_m$ is the expected communication complexity. 
Combining \eqref{eq:0003} and \eqref{eq:0004}, for any algorithm \textbf{Alg} 
with communication complexity $\delta\leq c/(2C) \cdot M/\log(T/M)=O(M/\log(T/M))$, we have
\begin{align*}
    \EE[\text{Regret}_{\textbf{Alg}}]&\ge cd\sqrt{MT}- \delta Cd\sqrt{T/M}\log(T/M)
    \ge cd\sqrt{MT}/2 = \Omega (d\sqrt{MT}).
\end{align*}
This finishes the proof of Theorem~\ref{thm:low}.
\end{proof}

\section{Auxiliary Lemmas}
To make the analysis self-contained in this paper, here we include the auxiliary 
lemmas that have been previously used. 

\begin{lemma}[Lemma 2.2 in \citealt{tie2011rearrangement}]\label{lem:det1}
For any positive semi-definite matrices $\Ab$, $\Bb$ and $\Cb$, it holds that
$\det(\Ab+\Bb+\Cb)+\det(\Ab)\ge \det(\Ab+\Bb)+\det(\Ab+\Cb)$.
\end{lemma}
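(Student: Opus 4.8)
The plan is to reduce this determinant inequality to an elementary trace inequality via a second-order calculus argument. First I would assume without loss of generality that $\Ab\succ 0$: if the inequality holds with $\Ab+\varepsilon\Ib$ in place of $\Ab$ for every $\varepsilon>0$, then letting $\varepsilon\to 0^+$ and using that $\det$ is continuous recovers the general positive semi-definite case. With $\Ab\succ 0$ fixed, define $\phi(s,t)=\det(\Ab+s\Bb+t\Cb)$ for $(s,t)\in[0,1]^2$. Since every entry of $\Ab+s\Bb+t\Cb$ is affine in $(s,t)$, the function $\phi$ is a polynomial, hence smooth, and the claimed inequality is precisely $\phi(1,1)+\phi(0,0)\ge\phi(1,0)+\phi(0,1)$, i.e.\ the mixed second difference of $\phi$ is nonnegative. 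Applying the fundamental theorem of calculus twice,
\[
\phi(1,1)-\phi(1,0)-\phi(0,1)+\phi(0,0)=\int_0^1\!\!\int_0^1 \frac{\partial^2\phi}{\partial s\,\partial t}(s,t)\,ds\,dt,
\]
so it suffices to show $\partial_s\partial_t\phi\ge 0$ on $[0,1]^2$.

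Next I would compute this mixed partial. Write $\Mb=\Ab+s\Bb+t\Cb\succ 0$. Using $\partial_s\det(\Mb)=\det(\Mb)\tr(\Mb^{-1}\Bb)$ together with $\partial_t\Mb^{-1}=-\Mb^{-1}\Cb\Mb^{-1}$, the product rule gives
\[
\frac{\partial^2\phi}{\partial s\,\partial t}=\det(\Mb)\Big(\tr(\Mb^{-1}\Bb)\,\tr(\Mb^{-1}\Cb)-\tr(\Mb^{-1}\Bb\Mb^{-1}\Cb)\Big).
\]
Since $\det(\Mb)>0$, it remains to prove the trace inequality $\tr(\Mb^{-1}\Bb)\,\tr(\Mb^{-1}\Cb)\ge \tr(\Mb^{-1}\Bb\Mb^{-1}\Cb)$. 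Conjugating by $\Mb^{-1/2}$ (well defined since $\Mb\succ 0$), set $\tilde\Bb=\Mb^{-1/2}\Bb\Mb^{-1/2}\succeq 0$ and $\tilde\Cb=\Mb^{-1/2}\Cb\Mb^{-1/2}\succeq 0$; by cyclicity of the trace this reduces to $\tr(\tilde\Bb)\,\tr(\tilde\Cb)\ge\tr(\tilde\Bb\tilde\Cb)$. Diagonalizing $\tilde\Bb$ with nonnegative eigenvalues $\lambda_1,\dots,\lambda_d$ and observing that the diagonal entries of the corresponding conjugate of the positive semi-definite matrix $\tilde\Cb$ are all nonnegative and sum to $\tr(\tilde\Cb)$, I get $\tr(\tilde\Bb\tilde\Cb)\le\big(\sum_i\lambda_i\big)\tr(\tilde\Cb)=\tr(\tilde\Bb)\,\tr(\tilde\Cb)$, which closes the argument.

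The main obstacle is simply carrying out the mixed-partial computation without sign or ordering errors (tracking the two $\Mb^{-1}$ factors correctly), after which the remaining estimate collapses, via the congruence transformation, to the essentially trivial fact that $\tr(\tilde\Bb\tilde\Cb)\le\tr(\tilde\Bb)\,\tr(\tilde\Cb)$ for positive semi-definite $\tilde\Bb,\tilde\Cb$; the reduction to $\Ab\succ 0$ and the integral representation of the second difference are the only other moving parts, and both are routine.
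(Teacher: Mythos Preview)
Your argument is correct. The paper does not actually prove this lemma; it merely quotes it as an auxiliary result from \citet{tie2011rearrangement}. Your integral-representation approach --- writing the second difference of $\phi(s,t)=\det(\Ab+s\Bb+t\Cb)$ as $\int_0^1\!\int_0^1\partial_s\partial_t\phi\,ds\,dt$, computing the mixed partial via Jacobi's formula, and reducing to the elementary inequality $\tr(\tilde\Bb\tilde\Cb)\le\tr(\tilde\Bb)\,\tr(\tilde\Cb)$ for positive semi-definite $\tilde\Bb,\tilde\Cb$ after the congruence $\tilde\Xb=\Mb^{-1/2}\Xb\Mb^{-1/2}$ --- is a clean, self-contained proof. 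All the steps check out: the continuity reduction to $\Ab\succ 0$ is legitimate, the sign in $\partial_t\Mb^{-1}=-\Mb^{-1}\Cb\Mb^{-1}$ is handled correctly, and the final trace bound follows exactly as you say since each $\lambda_i\ge 0$ and each diagonal entry of the conjugated $\tilde\Cb$ is nonnegative.
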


\begin{lemma}[Lemma 2.3 in \citealt{tie2011rearrangement}]\label{lem:det2}
For any positive semi-definite matrices $\Ab$, $\Bb$ and $\Cb$, it holds that
$\det(\Ab+\Bb+\Cb)\det(\Ab)\leq \det(\Ab+\Bb)\det(\Ab+\Cb)$.
\end{lemma}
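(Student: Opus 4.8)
The plan is to prove $\det(\Ab+\Bb+\Cb)\det(\Ab)\le\det(\Ab+\Bb)\det(\Ab+\Cb)$ by first reducing to the case $\Ab=\Ib$ and then establishing a single normalized determinant inequality. If $\Ab$ is singular, then $\det(\Ab)=0$, so the left-hand side is $0$ while the right-hand side is a product of two nonnegative numbers (both $\Ab+\Bb$ and $\Ab+\Cb$ are positive semidefinite), and the bound holds trivially; hence we may assume $\Ab\succ 0$ (alternatively this case suffices by continuity, replacing $\Ab$ with $\Ab+\epsilon\Ib$ and letting $\epsilon\downarrow 0$). With $\Ab\succ 0$, I would conjugate by its inverse square root: setting $\Xb=\Ab^{-1/2}\Bb\Ab^{-1/2}\succeq 0$ and $\Yb=\Ab^{-1/2}\Cb\Ab^{-1/2}\succeq 0$, multiplicativity of the determinant gives $\det(\Ab+\Bb)=\det(\Ab)\det(\Ib+\Xb)$, $\det(\Ab+\Cb)=\det(\Ab)\det(\Ib+\Yb)$, and $\det(\Ab+\Bb+\Cb)=\det(\Ab)\det(\Ib+\Xb+\Yb)$. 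Dividing both sides by $\det(\Ab)^2>0$, it suffices to prove that for all positive semidefinite $\Xb,\Yb\in\RR^{d\times d}$,
\[
\det(\Ib+\Xb+\Yb)\ \le\ \det(\Ib+\Xb)\det(\Ib+\Yb).
\]

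For this normalized inequality I would use a one-parameter monotonicity argument. Define $\psi(s)=\log\det(\Ib+s\Xb+\Yb)-\log\det(\Ib+s\Xb)$ for $s\in[0,1]$, so that $\psi(0)=\log\det(\Ib+\Yb)$ and the desired bound is exactly $\psi(1)\le\psi(0)$. Using $\frac{d}{ds}\log\det(\Ib+s\Xb+\Yb)=\tr\big((\Ib+s\Xb+\Yb)^{-1}\Xb\big)$ (and likewise without $\Yb$), we obtain $\psi'(s)=\tr\big((\Ib+s\Xb+\Yb)^{-1}\Xb\big)-\tr\big((\Ib+s\Xb)^{-1}\Xb\big)$. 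Since $\Ib+s\Xb+\Yb\succeq\Ib+s\Xb\succ 0$, operator antitonicity of matrix inversion gives $(\Ib+s\Xb+\Yb)^{-1}\preceq(\Ib+s\Xb)^{-1}$, and pairing this against the fixed positive semidefinite matrix $\Xb$ — using that $\tr(\mathbf P\Xb)=\tr(\Xb^{1/2}\mathbf P\Xb^{1/2})$ is monotone in $\mathbf P$ over the PSD cone — yields $\psi'(s)\le 0$ on $[0,1]$. Hence $\psi(1)\le\psi(0)$, which is the claim. A calculus-free variant is also available: write $\det(\Ib+\Xb+\Yb)=\det(\Ib+\Xb)\det(\Ib+\Wb)$ with $\Wb=(\Ib+\Xb)^{-1/2}\Yb(\Ib+\Xb)^{-1/2}$; then $\Wb$ has the same characteristic polynomial as the symmetric matrix $\Yb^{1/2}(\Ib+\Xb)^{-1}\Yb^{1/2}$, which is $\preceq\Yb$ because $(\Ib+\Xb)^{-1}\preceq\Ib$, so Weyl's monotonicity theorem gives $\lambda_k(\Wb)\le\lambda_k(\Yb)$ for every $k$, and therefore $\det(\Ib+\Wb)=\prod_k(1+\lambda_k(\Wb))\le\prod_k(1+\lambda_k(\Yb))=\det(\Ib+\Yb)$.

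There is no substantive obstacle here, as this is a standard matrix-analysis fact; the only points requiring slight care are the reduction to the positive-definite case (cleanly handled by the trivial singular case above) and, in the eigenvalue variant, the use of the general identity that $\mathbf M\mathbf N$ and $\mathbf N\mathbf M$ share the same characteristic polynomial, so that the similarity-type step remains valid even when $\Xb$ or $\Yb$ is singular.
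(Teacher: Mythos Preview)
Your argument is correct. The paper does not actually prove this lemma: it is stated as an auxiliary result with a citation to \citet{tie2011rearrangement} and no proof is given. Your reduction to the normalized inequality $\det(\Ib+\Xb+\Yb)\le\det(\Ib+\Xb)\det(\Ib+\Yb)$ via conjugation by $\Ab^{-1/2}$ is clean (with the singular case handled trivially), and both of your proofs of the normalized inequality are valid --- the monotonicity argument via $\psi'(s)\le 0$ and the eigenvalue comparison via $\Wb=(\Ib+\Xb)^{-1/2}\Yb(\Ib+\Xb)^{-1/2}$ each work without issue. So you have supplied a self-contained proof where the paper simply appeals to an external reference.
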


\begin{theorem}[Theorem 2 in \citealt{abbasi2011improved}]\label{thm: confidence ellipsoid}
    Suppose $\{\cF_t\}_{t=0}^\infty$ is a filtration.
    Let $\{\eta_t\}_{t=1}^\RR$ be a stochastic process in $\RR$ such that $\eta_t$ is $\cF_t$-measurable and $R$-sub-Gaussian conditioning on $\cF_{t-1}$, i.e, for any $c>0$,
    \begin{align*}
        \EE\left[ \exp\left( c \eta_t \right)\middle| \cF_{t-1} \right] 
        \leq \exp\bigg( \frac{c^2 R^2}{2} \bigg).
    \end{align*}
    Let $\{\xb_t\}_{t=1}^\infty$ be a stochastic process in $\RR^d$ such that $\xb_t$ is $\cF_{t-1}$-measurable and $\|\xb_t\|_2 \leq L$. 
    Let $y_t = \langle \xb_t, \btheta^* \rangle + \eta_t$ for some $\btheta^* \in \RR^d$ s.t. $\|\btheta^*\|_2\leq S$.  
    For any $t \geq 1$, define
    \begin{align*}
        \bSigma_t = \lambda \Ib + \sum_{i=1}^t \xb_t \xb_t^\top, \ \  \textnormal{and} \quad \hat\btheta_t = \bSigma_t^{-1} \sum_{i=1}^t \xb_i y_i,
    \end{align*}for some $\lambda>0$. 
    Then for any $\delta>0$, with probability at least $1-\delta$, for all $t$, we have 
    \begin{align*}
        \|\hat\btheta_t-\btheta^*\|_{\bSigma_t} 
        \leq R \sqrt{d \log\bigg( \frac{1+tL^2/\lambda}{\delta}\bigg)}+\sqrt{\lambda}S.
    \end{align*}
\end{theorem}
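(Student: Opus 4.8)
The plan is to prove the self-normalized confidence-ellipsoid bound via the \emph{method of mixtures} (pseudo-maximization). First I would separate the estimation error into a regularization bias and a noise term. Writing $\mathbf{z}_t = \sum_{i=1}^t \eta_i \xb_i$ and using $\sum_{i=1}^t \xb_i\xb_i^\top = \bSigma_t - \lambda\Ib$, the ridge estimator obeys $\hat\btheta_t - \btheta^* = \bSigma_t^{-1}\mathbf{z}_t - \lambda\bSigma_t^{-1}\btheta^*$. Taking the $\bSigma_t$-norm and the triangle inequality yields $\|\hat\btheta_t-\btheta^*\|_{\bSigma_t} \le \|\mathbf{z}_t\|_{\bSigma_t^{-1}} + \lambda\|\bSigma_t^{-1}\btheta^*\|_{\bSigma_t}$. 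Since $\bSigma_t \succeq \lambda\Ib$ gives $\bSigma_t^{-1}\preceq \lambda^{-1}\Ib$, the second term is at most $\sqrt\lambda\,\|\btheta^*\|_2 \le \sqrt\lambda S$. This reduces the theorem to the self-normalized bound $\|\mathbf{z}_t\|_{\bSigma_t^{-1}} \le R\sqrt{d\log((1+tL^2/\lambda)/\delta)}$.

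Next, I would construct the exponential supermartingale. For a fixed $\boldsymbol\xi \in \RR^d$, set $M_t(\boldsymbol\xi) = \exp\!\big(R^{-1}\langle\boldsymbol\xi,\mathbf{z}_t\rangle - \tfrac12\sum_{i=1}^t\langle\boldsymbol\xi,\xb_i\rangle^2\big)$. Because $\xb_i$ is $\cF_{i-1}$-measurable and $\eta_i$ is $R$-sub-Gaussian given $\cF_{i-1}$, the one-step conditional expectation satisfies $\EE[\exp(R^{-1}\eta_i\langle\boldsymbol\xi,\xb_i\rangle - \tfrac12\langle\boldsymbol\xi,\xb_i\rangle^2)\mid\cF_{i-1}] \le 1$, so $\{M_t(\boldsymbol\xi)\}_t$ is a nonnegative supermartingale with $\EE[M_t(\boldsymbol\xi)] \le 1$ for every $\boldsymbol\xi$.

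The crux is the mixing step. I would integrate $M_t(\boldsymbol\xi)$ against the $N(0,\lambda^{-1}\Ib)$ density $h$ to form $\bar M_t = \int_{\RR^d} M_t(\boldsymbol\xi)\,h(\boldsymbol\xi)\,d\boldsymbol\xi$; by Tonelli's theorem together with the pointwise supermartingale property, $\bar M_t$ is again a nonnegative supermartingale with $\EE[\bar M_t]\le 1$. Completing the square in the Gaussian integral — where the $\lambda^{-1}\Ib$ prior exactly supplies the $\lambda\Ib$ regularizer inside $\bSigma_t$ — evaluates $\bar M_t$ in closed form as $\big(\det(\lambda\Ib)/\det(\bSigma_t)\big)^{1/2}\exp\!\big(\tfrac{1}{2R^2}\|\mathbf{z}_t\|_{\bSigma_t^{-1}}^2\big)$. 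Applying Ville's maximal inequality, $\Pr(\sup_t \bar M_t \ge 1/\delta)\le\delta$, I obtain with probability at least $1-\delta$, simultaneously for all $t$, the inequality $\tfrac{1}{2R^2}\|\mathbf{z}_t\|_{\bSigma_t^{-1}}^2 \le \log(1/\delta) + \tfrac12\log\big(\det(\bSigma_t)/\lambda^d\big)$. Finally, bounding the determinant by AM-GM exactly as in the epoch-counting argument already used in the paper, $\det(\bSigma_t)\le(\tr(\bSigma_t)/d)^d \le \lambda^d(1+tL^2/(\lambda d))^d$, and rearranging gives the noise bound up to the standard simplification of the log term; combining it with the bias term $\sqrt\lambda S$ completes the proof.

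The main obstacle is this third step. The delicate points are: justifying that the mixture $\bar M_t$ stays a supermartingale (interchanging integration and conditional expectation via Tonelli), carrying out the Gaussian integral so that it produces precisely the determinant ratio and the quadratic form $\|\mathbf{z}_t\|_{\bSigma_t^{-1}}^2$, and — most importantly — upgrading the pointwise-in-$t$ tail bound to a \emph{uniform-in-$t$} guarantee through a stopped-supermartingale / Ville's inequality argument, since $\bar M_t$ need not converge and the relevant stopping time may be infinite. These are exactly the ingredients of the self-normalized bound of \citet{abbasi2011improved}.
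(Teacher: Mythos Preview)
Your proposal is correct and follows the standard method-of-mixtures argument from \citet{abbasi2011improved}. Note, however, that the present paper does not give its own proof of this statement: it is listed in the ``Auxiliary Lemmas'' appendix purely as a cited result (Theorem~2 of \citealt{abbasi2011improved}) and is invoked as a black box in Lemmas~\ref{lemma:global-concentration} and~\ref{lemma:local-concentration}. So there is no paper proof to compare against; what you have written is essentially the original proof from the cited reference, with the decomposition into bias plus self-normalized noise, the exponential supermartingale $M_t(\boldsymbol\xi)$, the Gaussian mixture yielding the determinant ratio, Ville's inequality for the time-uniform bound, and the AM--GM determinant estimate all in the right places.
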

\begin{lemma}[Lemma 12 in \citet{abbasi2011improved}]\label{lemma:det}
Suppose $\Ab, \Bb\in \RR^{d \times d}$ are two positive definite matrices satisfying that $\Ab \succeq \Bb$, then for any $\xb \in \RR^d$, $\|\xb\|_{\Ab} \leq \|\xb\|_{\Bb}\cdot \sqrt{\det(\Ab)/\det(\Bb)}$.
\end{lemma}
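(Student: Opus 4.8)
The plan is to reduce the claim to the special case $\Bb = \Ib$ via a whitening transformation, and then to finish with an elementary eigenvalue inequality.

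First I would use that $\Bb$ is positive definite to take its symmetric square root and set $\Mb := \Bb^{-1/2}\Ab\Bb^{-1/2}$. Then the hypothesis $\Ab \succeq \Bb$ becomes $\Mb \succeq \Ib$, multiplicativity of the determinant gives $\det(\Mb) = \det(\Ab)/\det(\Bb)$, and the substitution $\yb := \Bb^{1/2}\xb$ turns $\xb^\top \Ab\xb$ into $\yb^\top\Mb\yb$ and $\xb^\top\Bb\xb$ into $\|\yb\|_2^2$. So it suffices to show that whenever $\Mb \succeq \Ib$ one has $\yb^\top\Mb\yb \le \det(\Mb)\,\|\yb\|_2^2$ for every $\yb \in \RR^d$.

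Next I would let $\lambda_1 \ge \dots \ge \lambda_d \ge 1$ denote the eigenvalues of $\Mb$ (all at least $1$ because $\Mb \succeq \Ib$). The Rayleigh quotient bound gives $\yb^\top\Mb\yb \le \lambda_1\|\yb\|_2^2$, while $\det(\Mb) = \prod_{i=1}^d\lambda_i \ge \lambda_1$ since each remaining factor is $\ge 1$. Chaining these two facts yields $\yb^\top\Mb\yb \le \det(\Mb)\|\yb\|_2^2$; taking square roots and undoing the substitution gives $\|\xb\|_{\Ab} \le \sqrt{\det(\Ab)/\det(\Bb)}\,\|\xb\|_{\Bb}$, as desired.

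I do not expect a real obstacle here: the only genuine idea is the whitening step $\Mb = \Bb^{-1/2}\Ab\Bb^{-1/2}$, after which everything rests on the trivial observation that a matrix whose eigenvalues are all $\ge 1$ has its largest eigenvalue bounded by the product of all its eigenvalues. If one wishes to avoid matrix square roots altogether, an equivalent route is to work directly with the generalized eigenvalue pencil $\Ab\vb = \mu\Bb\vb$: its eigenvalues $\mu_1,\dots,\mu_d$ are all $\ge 1$ (from $\Ab \succeq \Bb$), satisfy $\prod_i\mu_i = \det(\Ab)/\det(\Bb)$, and obey $\sup_{\xb\neq\mathbf{0}} \xb^\top\Ab\xb/\xb^\top\Bb\xb = \mu_{\max}$, so the same inequality $\mu_{\max} \le \prod_i\mu_i$ closes the argument.
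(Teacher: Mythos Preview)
Your proof is correct. Note, however, that the paper does not supply its own proof of this lemma: it is listed under ``Auxiliary Lemmas'' with an explicit citation to Lemma~12 of \citet{abbasi2011improved}, and is quoted without argument. Your whitening-and-eigenvalue proof is the standard one (and matches the argument in the cited reference), so there is nothing to compare beyond saying that your proposal is a complete and correct justification of a result the present paper simply imports.
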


\bibliographystyle{ims}
\bibliography{reference}
\end{document}